\documentclass{article}

\usepackage{microtype}
\usepackage{graphicx}
\usepackage{subcaption}
\usepackage{booktabs} 

\usepackage{hyperref}
\usepackage{xurl}        
\hypersetup{breaklinks=true}

\usepackage[preprint]{icml2026}
\usepackage{amsmath}
\usepackage{amssymb}
\usepackage{mathtools}
\usepackage{amsthm}
\usepackage{tikz}
\usepackage{booktabs}
\usepackage{subcaption}
\usepackage[capitalize,noabbrev]{cleveref}

\theoremstyle{plain}
\newtheorem{theorem}{Theorem}[section]

\newtheorem{lemma}[theorem]{Lemma}

\theoremstyle{definition}
\newtheorem{definition}[theorem]{Definition}
\newtheorem{assumption}[theorem]{Assumption}
\theoremstyle{remark}

\usepackage[textsize=tiny]{todonotes}

\icmltitlerunning{From Ambiguity to Action: A POMDP Perspective on Partial Multi-Label Ambiguity and Its Horizon-One Resolution}

\begin{document}

\twocolumn[
  \icmltitle{From Ambiguity to Action: A POMDP Perspective on Partial Multi-Label Ambiguity and Its Horizon-One Resolution}



  \icmlsetsymbol{equal}{*}

  \begin{icmlauthorlist}
    \icmlauthor{Hanlin Pan}{yyy,sch}
    \icmlauthor{Yuhao Tang}{yyy,sch}
    \icmlauthor{Wanfu Gao}{equal,yyy,sch}

  \end{icmlauthorlist}

  \icmlaffiliation{yyy}{College of Computer Science and Technology, Jilin University, China}
 
  \icmlaffiliation{sch}{Key Laboratory of Symbolic Computation and Knowledge Engineering of Ministry of Education, Jilin University, China}

  \icmlcorrespondingauthor{Wanfu Gao}{gaowf@jlu.edu.cn}

  \icmlkeywords{Machine Learning, ICML}

  \vskip 0.3in
]



\printAffiliationsAndNotice{}  

\begin{abstract}
In partial multi-label learning (PML), the true labels are unobserved, which makes label disambiguation  important but difficult. A key challenge is that ambiguous candidate labels can propagate errors into downstream tasks such as feature engineering. To solve this issue, we jointly model the disambiguation and feature selection tasks as Partially Observable Markov Decision Processes (POMDP) to turn PML risk minimization into expected-return maximization. Stage 1 trains a transformer policy via reinforcement learning to produce high-quality hard pseudo-labels; Stage 2 describes feature selection as a sequential reinforcement learning problem, selecting features step by step and outputting an interpretable global ranking. We further provide the theoretical analysis of PML-POMDP correspondence and the excess-risk bound that decompose the error into pseudo label quality term and sample size. Experiments in multiple metrics and data sets verify the advantages of the framework\footnote{The code is available at \url{https://anonymous.4open.science/r/pomdp-hard-E050}}.
\end{abstract}
\section{Introduction}

Partial multi-label learning (PML) is a weakly supervised learning paradigm \cite{xie2018partial,yang2024noisy}, which often appears in practical fields with ambiguous but not completely missing annotations, such as  noisy topic tags \cite{chen2022representation,murshed2022enhancing}, image annotations  \cite{mazzamuto2022weakly,tejero2023full}, and bioinformatics tasks \cite{mongardi2024biologically,smarandache2024soft}. In PML, a candidate label set is provided for each instance, which is guaranteed to contain at least one unknown ground truth label as Figure \ref{mou}. Therefore, the key is label disambiguation: deciding which labels should be positive \cite{li2025calibrated,hang2023partial,zhong2024negative}. This step is critical since incorrect disambiguation will  mislead downstream tasks, especially in high-dimensional environments \cite{pan2025reconsidering,wang2022partial}.

\begin{figure}[H]
    \centering
    \begin{minipage}{0.7\linewidth}
        \includegraphics[width=\textwidth]{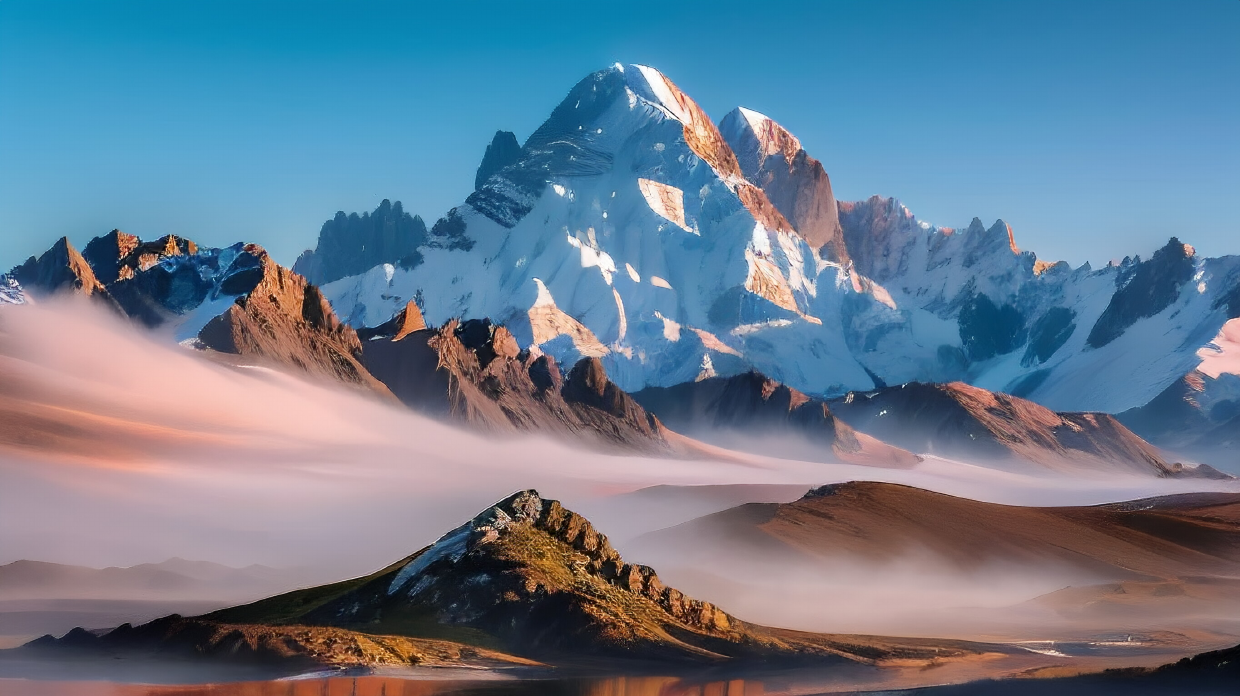}
    \end{minipage}%
    \begin{minipage}{0.3\linewidth}
        \raggedright
        \textbf{Candidate labels}
        \begin{itemize}
            \item \textcolor{red}{sky}
            \item people
            \item \textcolor{red}{mountain}
            \item sun
            \item \textcolor{red}{snow}        
        \end{itemize}
    \end{minipage}
    \caption{An example of PML, only three of five candidate labels are valid (in red).}
    \label{mou}
\end{figure}
\begin{figure*}[t]
\centering
\includegraphics[width=0.9\linewidth]{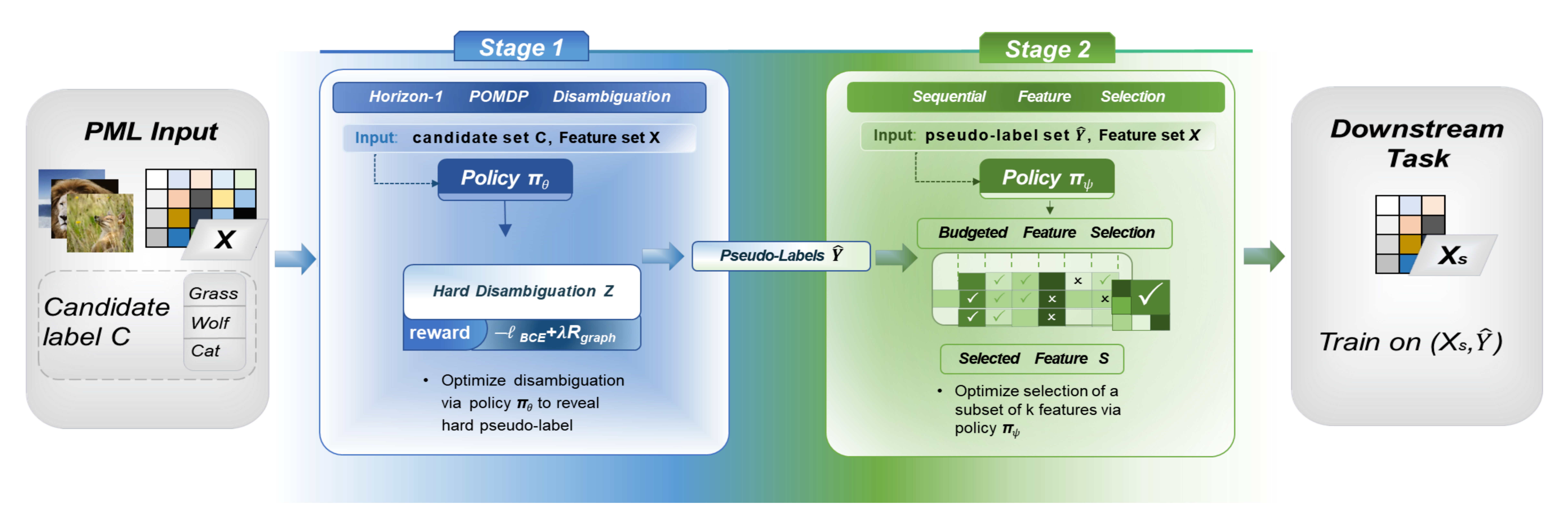}

\caption{Overview of our POMDP framework for PML. Stage 1 transforms the label disambiguation problem under candidate constraints into a horizon-1 decision problem and generates hard pseudo labels. In Stage 2, these pseudo labels are used to learn the budget feature selection strategy, generating a feature subset for downstream task. }
\label{flow}
\end{figure*}
Most of the existing methods either convert candidate labels into soft pseudo labels with logits \cite{zhang2020partial,xu2020partial}, or jointly optimize candidate labels in prediction training \cite{xie2021partial,liang2024partial}. However, soft disambiguation methods may be misled by pseudo candidates with higher logit, leading downstream learning to the wrong direction. The joint optimization needs to learn from two possible error sources: (i) imperfect pseudo labels and (ii) limited sample size and hypothesis complexity. This coupling harms stability and obscures how improvements in disambiguation translate to generalization.

By further abstracting the disambiguation task, the PML problem can be regarded as a one-step decision under constraints: given an observation of the dataset, learners must output a feasible disambiguation, in which the positive value only exists in the candidate set, and the real label is still invisible. Intuitively, these characters exactly match the structure of Partially Observable Markov Decision Processes (POMDPs) \cite{cassandra1998survey,arcieri2024pomdp}. Specifically, we convert PML disambiguation to horizon-1 POMDP, in which the hidden state is the ground truth label, the observation is PML dataset, and the action is the disambiguation. By defining the reward as the negative disambiguation loss, maximizing expected return is equivalent to minimizing the PML disambiguation risk, directly aligning policy optimization with the learning objective.

Building on this fit, we propose a two-stage framework as Figure \ref{flow} shows.  Stage 1 learns a horizon-1 disambiguation policy and generate hard pseudo labels using a discriminative loss together with a structural regularizer. In Stage 2, budget feature selection is performed to output interpretable feature subsets  for downstream tasks.

Our analysis further clarifies this logic: We establish the equivalence between PML and POMDP, provide stationary convergence for Stage 1 policy optimization under the standard smoothness assumption, and derive the excess risk decomposition for downstream predictors trained on pseudo labels: one item only depends on the quality of pseudo labels, while the other item only depends on the sample size and the hypothesis complexity. This decomposition illustrates when better disambiguation will (and will not) improve the final generalization.

We summarize main contributions as follows:
\begin{itemize}
  \item Formalizing PML disambiguation as horizon-1 POMDP, and associate the policy with disambiguation rules, under which minimizing PML risk is equivalent to maximizing return.
  \item Proposing a two-stage framework: In the Stage 1, candidate-constraint hard disambiguation is performed. In the Stage 2, feature selection is performed to generate interpretable feature subsets.
  \item Providing the proof of convergence and excess risk decomposition, separate the pseudo label quality from the sample size/complexity effect, and explain how gains transfer across stages.
\end{itemize}

Across multiple PML method and datasets, we achieve better performance on several metrics, and ablations confirm the importance of Stage 1 disambiguation. 
\section{Related Work}
\textbf{PML disambiguation.}
Existing PML methods can be roughly grouped into (i) soft disambiguation approaches that infer candidate-wise confidence scores \cite{durand2019learning,wang2020semi} or pseudo-label distributions \cite{xu2021progressive,gong2021understanding} and train predictors with reweighting or self-training \cite{duarte2021plm,feng2019partial}, and (ii) structure-based approaches that impose instance/label consistency via graphs \cite{lyu2020partial}, correlations \cite{sun2021global}, or low-rank priors \cite{ijcai2025p576}.
However, soft disambiguation methods are prone to assigning positive gradients to false candidates with relatively high scores, which may lead downstream learning (and feature selection) in the wrong direction. Although structure based methods are useful when assuming precise relationships, they are fragile to noise or incorrectly specified structures and often rely on iterative heuristics, where errors may accumulate or drift, resulting in limited controllability under candidate constraints.

\textbf{Feature selection under weak/ambiguous supervision.}
Previous work has included embedded/joint feature selection for multi label or weakly supervised learning \cite{wu2025partial,hao2025embedded}, as well as budget sequential feature selection for policy learning formulation \cite{liu2021automated}. However, pseudo label noise can severely distort feature selection, and most sequential methods assume relatively clean supervision, leading to the problem of error propagation from disambiguation quality to downstream generalization often being ignored.

In contrast, we transform PML disambiguation into candidate constraint horizon-1 POMDP and unify it with budget sequential feature selection to generate controllable hard pseudo labels and end-to-end decision consistency.

\section{Preliminaries}

\subsection{Partial Multi-Label Learning}
In this section, we will formally describe the partial multi-label feature selection problem studied in this paper.Let \(X \in \mathbb{R}^d\) denote the feature space and  \(Y = \{0,1\}^L\) the label space where \(Y = (Y_1,\dots,Y_L) \in \{0,1\}^L\)
is the unobserved ground-truth vector.

In the partial multi-label scenario, the real label space \(Y\) cannot be directly observed. Each input X will be assigned a candidate label set \( C(X) \in L = \{1,\dots,L\}\), at least one of which is a true positive label. For other negative labels that are not in the candidate label set, they can be considered absolutely correct.

\subsection{Partially Observable Markov Decision Processes}
In this chapter, we will briefly introduce Markov Decision Processes (MDPs) and Partially Observable Markov Decision Processes (POMDPs) used in our approach. The MDP is a tuple: 
\begin{equation}
\mathcal{M}=(\mathcal{S}, \mathcal{A}, P, r, \gamma),
\end{equation}
where  \(\mathcal{S}\) is the state space, \(\mathcal{A}\) the action space, \(P\) the transition kernel, \(r\) the reward function, and \(\gamma\) a discount factor. At each time step \(t\), the agent will observe and select the corresponding action \(A_t\) according to the current status \(S_t\) , obtain the corresponding reward \(R_t\) , and the status \(S_t\)  will also be transferred to \(S_{t+1}\).

in the Partially Observable Markov Decision Processes (POMDPs) situation, the true state \(S_t\) cannot be directly observed. Agents can only obtain observation \(O_t\) and can only choose actions based on past observations and actions.A classic policy
\(\pi\) for a POMDP maps observations to distributions over actions, and its expected return is:
\begin{equation}
J(\pi):=\mathbb{E}_{S, O, A \sim \pi}[r(S, A)] .
\end{equation}

\section{Method Proposed}
In the POMDP paradigm, the real environment cannot be completely observed, which is very similar to the PML problem where there is no real label set can be observed but only candidate label set. Based on this common ground, we present our two-stage POMDP framework for partial multi-label feature selection under candidate label sets. In the first stage, we use a horizon-1 POMDP for hard disambiguation and learns a stochastic policy to converge candidate sets; In the second stage, we use the pseudo label set obtained in the first stage as supervision and perform feature selection. The pseudocode is provided in Appendix A.
\subsection{Stage 1: Horizon-1 POMDP for hard label disambiguation}
We first model the hard disambiguation problem as a horizon-1 POMDP defined over the joint distribution of partial multi-label data. This allows us to factor the PML setting into the standard ingredients of a POMDP and to interpret a disambiguation rule as a policy. 

Let \(X \in \mathbb{R}^d\) be the feature vector, \(Y = \{0,1\}^L\) the label space where \(Y = (Y_1,\dots,Y_L) \in \{0,1\}^L\) be the unobserved ground-truth vector, and \( C(X) \in L = \{1,\dots,L\}\) be the candidate label set.The latent state of the environment is
is the unobserved ground-truth vector.  The latent state \(S\) of the environment is:
\begin{equation}
S=(X,Y,C),
\end{equation}
drawn from the underlying joint distribution induced by the PML data. While the agent can only get observation  \(O\):
\begin{equation}
O=(X,C),
\end{equation}
which is exactly the information available in the PML setting. The action \(A\) of the agent is a label vector:
\begin{equation}
A = Z \in \{0,1\}^L,
\end{equation}
\(Z\) is the pseudo label vector after disambiguation. After choosing 
\(Z\), the agent receives a one-step reward:
\begin{equation}
 r(S,Z) \;:=\; -\,\ell_{\mathrm{disc}}(X,C,Z),
\end{equation}
where $\ell_{\mathrm{disc}}$ is a label-free disambiguation surrogate used in our implementation. In our implementation,  \(\ell_{\mathrm{disc}}\) is the sum of a multi-label logistic loss and a structural regularizer.
\begin{equation}
\begin{aligned}
&\ell_{\text {disc }}(X,C)
=\frac{1}{L} \sum_{j=1}^L \ell_{\mathrm{BCE}}\left(U_j(X, C), Z_j\right) \\
&\quad +\lambda_{\text {struct }} \mathcal{R}_{\text {graph }}
\left(\left\{X_i\right\},\left\{U\left(X_i, C_i\right)\right\}\right),
\end{aligned}
\end{equation}
where \(U(X, C) \in \mathbb{R}^L\) is the logits of a discriminative head, \( \ell_{\mathrm{BCE}}\) is the binary cross-entropy between logits and hard pseudo labels, and \(\mathcal{R}_{\text {graph }}\) is a Laplacian regularizer.

As we use the horizon-1 POMDP, each iteration is composed of Independent states \(S=(X,Y,C)\), observations \(O=(X,C)\), actions \(Z\) and rewards \(r(S, Z)\). A stochastic policy \(\pi_\theta\) maps the observations to the distribution over the pseudo hard label. Its expected return is:

\begin{equation}
\begin{aligned}
&J(\pi):=\mathbb{E}_{(S, O, Z) \sim \pi}[r(S, Z)] \\
&=-\mathbb{E}_{(X,C)}\left[\ell_{\text {disc }}\left(X,C,Z\right)\right],
\end{aligned}
\end{equation}

where \(g_\theta\) denotes the hard disambiguation rule induced by \(\pi_\theta\). In 5.1, we proved that maximizing \(J(\theta)\) is equivalent to  minimizing partial multi-label disambiguation risk. Therefore stage 1 can  be viewed as learning an optimal disambiguation policy in the scenario of horizon-1 POMDP.

We implement policy \(\pi_\theta\) through a dual head encoder. One is policy head, and the other is discriminative head. We use the observation \(O=(X,C)\), as the input to the transformer to get the output representation \(\phi(O)\in \mathbb{R}^d\), which is the token of the final layer, and then feed it to both heads. The policy head will implement Bernoulli factorization for all candidate labels: for each candidate label c:
\begin{equation}
p_{\theta, j}(O)=\sigma\left(\theta_j^{\top} \phi(O)\right),
\end{equation}
By sampling the Bernoulli variables with parameter \(p_{\theta, j}(O)\), we obtain the probability \(\pi_\theta(Z \mid O)\) representing the accuracy of the candidate label. Then we can obtain the pseudo label \(\hat{Y}=g_\theta(X, C)\) by a threshold \(Z_j=\mathbb{I}\left[p_{\theta, j}(O) \geq 1 / 2\right]\).

The discriminative head receives \(\phi(O)\) and outputs Logits \(U\), which is used to calculate a loss \(\ell_{\mathrm{disc}}\)  based on pseudo label \(Z\). For every batch, we update encoder and discriminative head parameters by minimizing \(\ell_{\mathrm{disc}}\). 

For the policy header, we take the negative discrimination loss \(\ell_{\mathrm{disc}}\) as the reward signal. For each batch:
\begin{equation}
r^{\mathcal{B}}=-\ell_{\mathrm{disc}}^{\mathcal{B}},
\end{equation}
The policy head parameters are updated using a reinforcement objective:
\begin{equation}
\mathcal{L}_{\mathrm{pol}}(\theta)=-r^B \log \pi_\theta^B,
\end{equation}
which corresponds to a stochastic gradient ascent step on 
\(J(\theta)\). In implementation, we detach the encoder representation \(\phi(O)\) when computing \(\mathcal{L}_{\mathrm{pol}}\), so that the encoder and discriminative head are updated only through \(\ell_{\mathrm{disc}}^{\mathcal{B}}\), while the policy head is updated only through \(\mathcal{L}_{\mathrm{pol}}\).

\subsection{Stage-2: RL-Based Sequential Feature Selection}
Given the hard pseudo labels generated in Stage 1, Stage 2 learns to select a feature subset through RL agent. Feature selection is modeled again as a dual head POMDP process over feature space: starting from an empty mask, the agent iteratively selects features to be revealed, and finally evaluates the resulting subset through the supervised loss calculated under the pseudo labels.

For an input \((X,Y^{ps})\) where \(Y^{ps}\) is set of pseudo labels. With a fixed feature budget \(k_{\mathrm{fs}}\), we define a finite-horizon POMDP of length:
\begin{equation}
T=\min \left(k_{\mathrm{fs}}, d\right) .
\end{equation}
At decision step \(t\), a binary mask \(m_t \in \{0,1\}^d\) is used to indicate the selected features. The underlying state at step \(t\) can be viewed as a tuple \(S_t=(X,Y^{ps},m_t)\) while the agent observes a partially revealed feature vector:
\begin{equation}
x_t=m_t \odot x,
\end{equation}
where \(\odot\) denotes element-wise multiplication and unselected coordinates are set to zero. Encoder implemented in Stage 1 is reused to map \(x_t\) to a representation \(h_t\). 

In each step, the agent's action is selecting a new feature index. A feature selection policy head \(\pi_\psi\) maps \(h_t\) to logits on the feature index to select the new feature by sampling \(a_t\) from \(\pi_\psi\left(\cdot \mid h_t\right)\). While the selected features are masked by adding a large negative constant to their logits. 

Finally, the agent would select a subset of features encoded by \(m_T\). Then the observation \(m_T\) would be mapped to representation \(h_T\) to be fed into the discriminative head. A supervised loss is defined against the pseudo labels by the output of the discriminative head:
\begin{equation}
\ell_{\mathrm{sup}}\left(x, y^{\mathrm{ps}}, m_T\right)=\frac{1}{L} \sum_{j=1}^L \ell_{\mathrm{BCE}}\left(\hat{s}_j, y_j^{\mathrm{ps}}\right),
\end{equation}
and the terminal reward of the trajectory is set as the negative loss. Therefore, the subset that allows accurate prediction of pseudo labels will get higher rewards, while the selection that is not informative or redundant will be punished.

The trajectory of a single sample is a sequence \((h_0,a_0,...,h_{T-1},a_{T-1},h_{T})\). A feature-selection policy \(\pi_\psi\) is trained to maximize the expected terminal reward. We use a standard policy-gradient approach: for each trajectory, we form the log-probability of the action sequence:
\begin{equation}
\log \pi_\psi(\tau)=\sum_{t=0}^{T-1}\log \pi_\psi(a_t\mid h_t).
,
\end{equation}
and update \(\psi\) in the ascent direction of the objective:
\begin{equation}
J(\psi)=\mathbb{E}\left[r\left(x, y^{\mathrm{ps}}, m_T\right)\right]
\end{equation}
using the reinforcement estimator. In practice, we use:
\begin{equation}
\mathcal{L}_{\mathrm{pol}}^{\mathrm{fs}}(\psi)=-\mathbb{E}\left[(r-b) \log \pi_\psi(\tau)\right],
\end{equation}
Where \(b\) is the scalar baseline (the average of recent rewards) used to reduce variance. Encoder and  head are trained by a joint objective function:
\begin{equation}
\mathcal{L}_{\text {stage2 }}=\mathbb{E}\left[\ell_{\text {sup }}\left(x, y^{\mathrm{ps}}, m_T\right)\right]+\lambda_{\mathrm{fs}} \mathcal{L}_{\mathrm{pol}}^{\mathrm{fs}}(\psi),
\end{equation}

where \(\lambda\) controls the weight of RL items. In the experiment, we use the encoder and discriminative head extracted from the convergent first stage model,  and then fine tune them together with \(\pi_\psi\), so as to stabilize the training empirically to obtain better performance.

In the experiment, we interpret the learned strategy as a feature selector. We rank features by averaging the scores over the training data, which will produce a global ranking; For any required budget \(K\), we select the first \(K\) features and train downstream classifiers on these subsets, such as SVM or MLKNN. In this way, Stage 2 provides a label aware feature ranking guided by the pseudo labels constructed in Stage 1.

\section{Theoretical Analysis}

In this section, we provide a concise theoretical analysis for our two-stage framework. First, we prove that the horizon-1 POMDP introduced in Section 4.1 is equivalent to the partial multi-label disambiguation problem, so that optimizing the one-step reward for is consistent with minimizing the risk of PML disambiguation (Theorem 5.2). Next, we prove that under mild regularity conditions, the RL agent used in the first stage converges to the first-order stationary point of the surrogate objective, resulting in a locally optimal hard disambiguation strategy (Theorem 5.3). Finally, we derive the excess-risk bound for prediction using the selected features, where the error is decomposed into a term controlled by the pseudo label quality in the first stage and a standard generalization term determined by the sparsity level and sample size in the second stage (Theorem 5.5).
\subsection{Equivalence between horizon-1 POMDP and PML disambiguation}
We now formalize the connection between the horizon-1 POMDP introduced in Section 4.1 and the partial multi-label disambiguation problem. 
\begin{definition}
Given the surrogate loss \(\ell_{\mathrm{disc}}\), for any disambiguation rule \(g\) that maps an observation \((X,C)\) to hard pseudo label vector \(Z\), we define its risk of partial multi-label disambiguation as:
\begin{equation}
\mathcal{R}_{\mathrm{PML}}(g)=\mathbb{E}_{(X, C)}\left[\ell_{\text {disc }}\left(X,C,Z\right)\right].
\end{equation}
\end{definition}

Let \(\Pi\) denote the class of stochastic policies \(\pi\) that maps each observation \(O\) to a probability distribution over \(A(X,C)\). For any \(\pi\in\Pi\), its expected return in this horizon-1 POMDP is:
\begin{equation}
J(\pi):=\mathbb{E}_{(S, O, Z) \sim \pi}[r(S, Z)].
\end{equation}

\begin{theorem}[Equivalence between PML risk and horizon-1 POMDP return]
Assume that \(\ell_{\mathrm{disc}}(X, C)\) is measurable and integrable with respect to the joint distribution of \((X,C)\) by any \(\pi\in\Pi\):

(a) For every stochastic policy  \(\pi\in\Pi\) , the mapping below defines a disambiguation rule
\begin{equation}
g_\pi(X, C):=Z, \quad Z \sim \pi(\cdot \mid X, C)
\end{equation}
of which disambiguation risk is consistent with the negative POMDP return:
\begin{equation}
J(\pi)=-\mathcal{R}_{\mathrm{PML}}\left(g_\pi\right) .
\end{equation}

(b) Correspondingly, for each disambiguation rule \(g\), there is a policy \(\pi\in\Pi\) such that  \(g=g_{\pi_g}\) and \(J(\pi)=-\mathcal{R}_{\mathrm{PML}}\left(g\right)\)

(c) Let \(\mathcal{R}_{\mathrm{PML}}^\star:=\inf_{g\in\mathcal{G}} \mathcal{R}_{\mathrm{PML}}(g)\) and \(J^\star:=\sup_{\pi\in\Pi} J(\pi)\). Then:
\begin{equation}
J^\star = -\mathcal{R}_{\mathrm{PML}}^\star,
\end{equation}
and any Bayes-optimal disambiguation rule \(g^\star \in \arg\min_{g\in\mathcal{G}} \mathcal{R}_{\mathrm{PML}}(g)\) can be realized (up to a set of measure zero) by some optimal policy \(\pi^\star\in\arg\max_{\pi\in\Pi} J(\pi)\), and vice versa.

\end{theorem}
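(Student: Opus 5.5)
The plan is to make precise the sense in which a stochastic policy ``is'' a disambiguation rule, and then reduce all three parts to one identity: conditioning on the observation plus the tower property. The setup has three ingredients.

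\begin{itemize}
\item The reward $r(S,Z)=-\ell_{\mathrm{disc}}(X,C,Z)$ depends on the latent state $S=(X,Y,C)$ only through the observation $O=(X,C)$.
\item I will therefore read $\mathcal{R}_{\mathrm{PML}}(g)$ for a randomized rule $g$ as $\mathbb{E}_{(X,C)}\mathbb{E}_{Z\sim g(\cdot\mid X,C)}[\ell_{\mathrm{disc}}(X,C,Z)]$. That is, a disambiguation rule in $\mathcal{G}$ is a Markov kernel from observations to $\{0,1\}^L$ supported on the feasible set $A(X,C)$, with deterministic rules as Dirac kernels.
\item The integrability assumption, together with the finiteness of the action space $\{0,1\}^L$, guarantees that every iterated expectation below is well defined. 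Measurability of $\pi(z\mid o)$ in $o$ for each fixed $z$ makes the joint law of $(S,O,Z)$ well defined by Ionescu--Tulcea.
\end{itemize}

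For part (a), I would write $J(\pi)=\mathbb{E}_{S}\,\mathbb{E}_{Z\sim\pi(\cdot\mid O)}[r(S,Z)]$. Since $r$ depends on $S$ only through $O$, the integration over $Y$ given $(X,C)$ is trivial. This yields $J(\pi)=-\mathbb{E}_{(X,C)}\sum_{z}\pi(z\mid X,C)\,\ell_{\mathrm{disc}}(X,C,z)=-\mathcal{R}_{\mathrm{PML}}(g_\pi)$. The only care needed is Fubini, which the finite sum over $z$ plus integrability justifies.

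For part (b), given a rule $g$ I would define $\pi_g(\cdot\mid X,C):=g(\cdot\mid X,C)$; for deterministic $g$ this is the point mass $\delta_{g(X,C)}$. This kernel is measurable because $g$ is, and it is supported in $A(X,C)$ because $g$ respects the candidate constraint. Hence $\pi_g\in\Pi$, $g_{\pi_g}=g$ in law, and part (a) gives $J(\pi_g)=-\mathcal{R}_{\mathrm{PML}}(g)$.

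For part (c), parts (a) and (b) say that $\pi\mapsto g_\pi$ is a bijection between $\Pi$ and $\mathcal{G}$, up to kernels that agree $P_{(X,C)}$-almost surely, and that it carries $J$ to $-\mathcal{R}_{\mathrm{PML}}$. Taking the supremum on one side and the infimum on the other then gives $J^\star=-\mathcal{R}^\star_{\mathrm{PML}}$ directly. For the argmax/argmin correspondence: if $g^\star$ attains $\mathcal{R}^\star_{\mathrm{PML}}$, then $J(\pi_{g^\star})=J^\star$, and conversely for an optimal $\pi^\star$.

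The main obstacle is the ``up to measure zero'' clause. Two kernels differing on a $P_{(X,C)}$-null set have identical $J$ and identical risk, so optimal objects are determined only almost surely. I would handle this by stating the correspondence on equivalence classes modulo $P_{(X,C)}$-null sets.

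A subtler point is whether $\mathcal{G}$ contains only deterministic rules. In that case I would add a pointwise argument. Let $m(x,c):=\min_{z\in A(x,c)}\ell_{\mathrm{disc}}(x,c,z)$. For every $\pi$, the conditional expected loss $\sum_z \pi(z\mid x,c)\,\ell_{\mathrm{disc}}(x,c,z)$ is at least $m(x,c)$. A measurable selection of the argmin exists because the action set is finite, so one can tie-break by lexicographic order. That selection is a deterministic rule attaining the infimum, so $\sup_\Pi J=-\inf_{\mathcal{G}}\mathcal{R}_{\mathrm{PML}}$ still holds. Moreover, any optimal $\pi^\star$ must put mass only on pointwise minimizers almost surely, so any measurable selection from its support is a Bayes-optimal deterministic rule. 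This gives the ``vice versa'' direction.
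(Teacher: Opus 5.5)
Your proposal is correct and follows the paper's proof in Appendix B essentially step for step: part (a) conditions on $(X,C)$ so that $J(\pi)$ and $-\mathcal{R}_{\mathrm{PML}}(g_\pi)$ become the same iterated expectation, part (b) takes $\pi_g$ to be the conditional law of $g(x,c)$, and part (c) uses the two sup/inf inequalities and then plugs optimizers into (a) and (b). Your Markov-kernel formalism and null-set equivalence classes tighten points the paper leaves loose (for instance, it asserts $g_{\pi_g}=g$ almost surely where only equality in law is available). Your deterministic-rule addendum is unnecessary under the paper's definition of $\mathcal{G}$, which explicitly admits randomized rules, but it is a valid strengthening showing that randomization cannot beat the pointwise minimizer.
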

To sum up, solving PML disambiguation is equivalent to solving horizon-1 POMDP: minimizing disambiguation risk \(\mathcal{R}_{\mathrm{PML}}(g) \) is the same as \(j (\pi)\) which maximizes the expected one-step reward on any policy, and Bayes optimal hard disambiguator is consistent with the optimal strategy of horizon-1 POMDP. A fully detailed proof is given in Appendix B.

\subsection{Convergence of Stage-1 policy gradient}

Using the Theorem 5.2, the Stage 1 policy \(\pi_\theta\) derives a hard disambiguation rule \(g_\theta\), and the risk of its partial multi-label disambiguation can be written as: 
\begin{equation}
\mathcal{R}_{\mathrm{PML}}(\theta)
:= \mathcal{R}_{\mathrm{PML}}(g_\theta)
= \mathbb{E}_{(X, C)}\left[\ell_{\text {disc }}\left(X, C, Z\right)\right],
\end{equation}
where the expected value takes the PML distribution \((X, Y, C)\) and the randomness of the policy \(\pi _\theta (Z\mid X, C)\). In Stage 1, we parameterize \(\pi_\theta\) through the policy head acting on the fixed representation of \((X, C)\), and update \(\theta\) through reinforcement policy.

\begin{theorem}[Convergence to a first-order stationary point]

Let \({\theta_t}\) be the sequence of policy parameters produced by the Stage-1 updates:
\begin{equation}
\theta_{t+1}
= \theta_t - \alpha_t g_t,
\end{equation}
where \(\alpha_t>0\) is the step size at iteration \(t\) and
\begin{equation}
g_t
= \ell_{\mathrm{disc}}(X_t,C_t,Z_t)\nabla_\theta \log \pi_\theta(Z_t\mid X_t,C_t)
\Big|_{\theta=\theta_t},
\end{equation}
is the reinforcement gradient estimator constructed from an i.i.d.\ sample \((X_t,C_t)\) and an action \(Z_t \sim \pi_{\theta_t}(\cdot\mid X_t,C_t).\). Assume that:

(a) the surrogate risk $\mathcal{R}_{\mathrm{PML}}(\theta)$ is bounded below and continuously differentiable on a closed convex parameter set $\Theta\subset\mathbb{R}^p$, and its gradient $\nabla_\theta \mathcal{R}_{\mathrm{PML}}(\theta)$ is Lipschitz-continuous on $\Theta$.

(b) the stochastic gradients are unbiased and have bounded second moments, for some finite constant (G\textgreater0) and all \(t\),
\begin{equation}
\mathbb{E}[g_t \mid \theta_t] = \nabla_\theta \mathcal{R}_{\mathrm{PML}}(\theta_t),
\quad
\mathbb{E}\big[||g_t||^2 \mid \theta_t\big] \le G^2
\end{equation}

(c) the step sizes satisfy the Robbins--Monro conditions
\begin{equation}
\sum_{t=0}^\infty \alpha_t = \infty,
\quad
\sum_{t=0}^\infty \alpha_t^2 < \infty.
\end{equation}
Then \(\mathcal{R}_{\mathrm{PML}}(\theta_t)\) converges almost surely, and
\begin{equation}
\lim_{t\to\infty} \big|\big|\nabla_\theta \mathcal{R}_{\mathrm{PML}}(\theta_t)\big|\big| = 0
\end{equation}
is almost surely.

\end{theorem}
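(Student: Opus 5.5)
We argue along the lines of the classical Robbins--Siegmund analysis of stochastic gradient descent on a nonconvex $L$-smooth objective, writing $F(\theta):=\mathcal{R}_{\mathrm{PML}}(\theta)$. Let $M$ be the Lipschitz constant of $\nabla F$ from assumption (a). We also need the iterates to stay in $\Theta$. Either we read the update as projected onto the closed convex set $\Theta$, using nonexpansiveness of the projection, or we assume the iterates remain in $\Theta$. We state this reading explicitly, since the statement leaves it implicit.

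\textbf{Step 1: one-step descent inequality.} By the descent lemma for functions with $M$-Lipschitz gradient,
\begin{equation*}
F(\theta_{t+1}) \le F(\theta_t) - \alpha_t \langle \nabla F(\theta_t), g_t\rangle + \tfrac{M}{2}\alpha_t^2 \|g_t\|^2 .
\end{equation*}
We take conditional expectation given $\mathcal{F}_t=\sigma(\theta_0,\dots,\theta_t)$ and use assumption (b). Unbiasedness turns the cross term into $\alpha_t\|\nabla F(\theta_t)\|^2$, and the second-moment bound controls the last term. This gives
\begin{equation*}
\mathbb{E}[F(\theta_{t+1})\mid\mathcal{F}_t] \le F(\theta_t) - \alpha_t\|\nabla F(\theta_t)\|^2 + \tfrac{M G^2}{2}\alpha_t^2 .
\end{equation*}

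\textbf{Step 2: Robbins--Siegmund.} Since $F$ is bounded below by some $F_{\inf}$, we apply the lemma to the nonnegative process $V_t:=F(\theta_t)-F_{\inf}$. The additive perturbations are summable because $\sum_t \alpha_t^2<\infty$. The lemma then yields two conclusions almost surely:
\begin{itemize}
\item $F(\theta_t)$ converges, which is the first claim;
\item $\sum_t \alpha_t \|\nabla F(\theta_t)\|^2<\infty$.
\end{itemize}
Combined with $\sum_t\alpha_t=\infty$, the second conclusion gives $\liminf_t \|\nabla F(\theta_t)\|=0$ almost surely.

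\textbf{Step 3: upgrade $\liminf$ to $\lim$.} This is the main obstacle. We argue by contradiction, following Bertsekas--Tsitsiklis.

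First, the noise sum $\sum_t \alpha_t (g_t - \nabla F(\theta_t))$ is a martingale with $\sum_t \alpha_t^2\,\mathbb{E}\|g_t\|^2 \le G^2\sum_t\alpha_t^2<\infty$. It is therefore $L^2$-bounded and converges almost surely. Consequently the tails $\sum_{s=t}^{u}\alpha_s(g_s-\nabla F(\theta_s))$ tend to zero uniformly in $u$.

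Now suppose $\limsup_t\|\nabla F(\theta_t)\|>2\varepsilon$ on an event of positive probability. Then there are infinitely many upcrossing intervals $[t_k,u_k]$ on which $\|\nabla F\|$ rises from below $\varepsilon$ to above $2\varepsilon$ and stays at least $\varepsilon$ in between.

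\begin{itemize}
\item By the Lipschitz property of $\nabla F$, each such crossing requires $\|\theta_{u_k}-\theta_{t_k}\|\ge \varepsilon/M$.
\item The displacement $\theta_{u_k}-\theta_{t_k}$ equals the deterministic part $-\sum_{s=t_k}^{u_k-1}\alpha_s\nabla F(\theta_s)$ plus a martingale tail. The martingale tail vanishes as $k\to\infty$.
\item Hence $\sum_{s=t_k}^{u_k-1}\alpha_s\|\nabla F(\theta_s)\|\gtrsim \varepsilon/M$ for all large $k$.
\item Since $\|\nabla F(\theta_s)\|\ge\varepsilon$ on these intervals, we get $\sum_{s\in[t_k,u_k)}\alpha_s\|\nabla F(\theta_s)\|^2\gtrsim \varepsilon^2/M$ for infinitely many disjoint intervals.
\end{itemize}

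This contradicts the summability from Step 2. The only delicate point is the single step entering each interval. It is handled by noting that $\alpha_t\|g_t\|\to0$ almost surely, which follows from $\sum_t\alpha_t^2\|g_t\|^2<\infty$ almost surely, itself a consequence of the second-moment bound. Therefore $\|\nabla F(\theta_t)\|\to0$ almost surely.
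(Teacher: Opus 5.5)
Your proof is correct. Its first half matches the paper: the descent lemma along $-\alpha_t g_t$, conditioning on the past, unbiasedness for the cross term, and the $G^2$ bound for the quadratic term. The almost-sure part, however, goes by a genuinely different route. The paper takes full expectations and telescopes to get $\sum_t\alpha_t\mathbb{E}\|\nabla F(\theta_t)\|^2<\infty$ and concludes $\liminf_t\mathbb{E}\|\nabla F(\theta_t)\|^2=0$. It then only asserts that ``a standard argument'' gives almost-sure convergence of $F(\theta_t)$ and $\|\nabla F(\theta_t)\|\to0$. Summability in expectation yields, by Tonelli, only $\sum_t\alpha_t\|\nabla F(\theta_t)\|^2<\infty$ a.s., hence an a.s.\ $\liminf$ of zero. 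Convergence of $F(\theta_t)$ needs a supermartingale argument, and the full limit needs an oscillation argument. Your Robbins--Siegmund step and your Bertsekas--Tsitsiklis upcrossing step supply exactly that missing content, so your version is a complete proof where the paper's is a sketch.

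Two minor points. First, the entering step of each upcrossing does not need $\alpha_t\|g_t\|\to0$. The noise already sits in the martingale tail, so the only leftover deterministic term is $\alpha_{t_k}\|\nabla F(\theta_{t_k})\|<\alpha_{t_k}\varepsilon\to0$. Second, you read (b) as conditional on the whole history $\mathcal{F}_t$ rather than on $\theta_t$ alone. This should be justified explicitly by the fresh i.i.d.\ draw of $(X_t,C_t)$ and $Z_t\sim\pi_{\theta_t}$.

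One correction: drop the projected reading of the update. With $\theta_{t+1}=P_\Theta(\theta_t-\alpha_t g_t)$ the increment is no longer $-\alpha_t g_t$. Neither the unbiased cross term in Step 1 nor the displacement identity in Step 3 survives, and nonexpansiveness of $P_\Theta$ does not restore them. The conclusion itself also fails. For $\Theta=[0,1]$, $F(\theta)=\theta$ and $g_t\equiv1$, the projected iterates reach $0$ while $\|\nabla F\|\equiv1$; projected schemes only drive the gradient mapping to zero, not the gradient. The reading that works is your other one: the unprojected iterates remain in $\Theta$, for instance $\Theta=\mathbb{R}^p$. The paper's proof relies on this silently, since it applies the descent lemma along $\theta_t-\alpha_t g_t$ while assuming Lipschitz continuity of the gradient only on $\Theta$.
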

In particular, each limit point \(\{\theta_t\}\) of the sequence is the first-order stationary point of \(\mathcal{R}_{\mathrm{pml}}\), and the corresponding strategy \(\pi_{\theta_ t}\) is a locally optimal hard disambiguation strategy in the sense of Theorem 5.2.  The fully detailed proof is provided in Appendix C.

\subsection{Generalization risk under pseudo-label supervision}

We now analyze the impact of using pseudo labels generated by Stage 1 in Stage 2 on downstream classifiers. Let \(\ell_{\mathrm{cls}}\) denote the loss of multi label classification for the training Stage 2 classifier. For the classifier $f$(e.g., the classifier at the top of the selected feature), we define its overall risk under the real label as:
\begin{equation}
\mathcal{R}_{\mathrm{cls}}(f)
:= \mathbb{E}\big[\ell_{\mathrm{cls}}(f(X),Y)\big].
\end{equation}
Then if the classifier uses the hard pseudo labels, the risk is:
\begin{equation}
\mathcal{R}_{\mathrm{cls}}^{(g)}(f)
:= \mathbb{E}\big[\ell_{\mathrm{cls}}(f(X),Z)\big],
\end{equation}
In order to clarify the impact of disambiguation errors, we introduce the average pseudo label noise level \(\varepsilon_{\mathrm{pseudo}}(g)\) based on the normalized Hamming distance between real labels and pseudo labels:
\begin{equation}
\varepsilon_{\mathrm{pseudo}}(g)
:= \mathbb{E}\big[d_{\mathrm{H}}(Y, Z)\big],
\end{equation}
where \(d_{\mathrm{H}}\) is the Hamming distance on \(\{0,1\}^L\) scaled to lie in \([0,1]\). Intuitively, \(\varepsilon_{\mathrm{pseudo}}(g)\) captures the average error disambiguation frequency of disambiguation rules.

\begin{assumption}

We assume that \(\ell_{\mathrm{cls}}\) is Lipschitz in its label argument with respect to this distance. Then the loss \(\ell_{\mathrm{cls}}(\cdot,\cdot)\) is bounded by \(M\) for some constant $M>0$ and there exists a constant \(L_{\mathrm{cls}}>0\) such that, for all predictions \(\hat{y}\in[0,1]^L\) and all label vectors \(y,y'\in \{0,1\}^L\),
\begin{equation}
\big|\ell_{\mathrm{cls}}(\hat{y},y) - \ell_{\mathrm{cls}}(\hat{y},y')\big|
\le L_{\mathrm{cls}}\,d_{\mathrm{H}}(y,y').
\end{equation}
Let \(\mathcal{F}\) be the hypothesis class used for the downstream classifier after Stage 2. Given \(n\) i.i.d. training examples \(\{(X_i,Y_i,C_i)\}_{i=1}^n\) and pseudo labels \(Z_i=g(X_i,C_i)\), we consider an empirical pseudo-label risk minimizer
\begin{equation}
\hat{f} \in \arg\min_{f\in\mathcal{F}} \widehat{\mathcal{R}}{\mathrm{cls}}^{(g)}(f),
\end{equation}
and compare its performance to the best-in-class predictor
\begin{equation}
f^\star \in \arg\min_{f\in\mathcal{F}} \mathcal{R}_{\mathrm{cls}}(f).
\end{equation}
\end{assumption}

We now derive a generalization bound for the Stage 2 classifier trained on pseudo labels.
\begin{theorem}[Excess risk bound for pseudo-label training]
Let Assumption~5.4 and Assumption~D.1 hold (see Appendix~D). Let \(\mathcal{F}\) be any hypothesis class of predictors \(f:\mathcal{X}\to[0,1]^L\). For any disambiguation rule \(g\) and any \(\delta\in(0,1)\), with probability at least \(1-\delta\) over the draw of the sample \(\{(X_i,Y_i,C_i)\}_{i=1}^n\), the empirical pseudo label minimizer \(\hat{f}\) satisfies:
\begin{equation}
\mathcal{R}_{\mathrm{cls}}(\hat{f}) - \mathcal{R}_{\mathrm{cls}}(f^\star)
\le2L_{\mathrm{cls}}\varepsilon_{\mathrm{pseudo}}(g)
+2\mathrm{Gen}_n(\mathcal{F},\delta),
\end{equation}
where \(\mathrm{Gen}_n(\mathcal{F},\delta)\) is a standard generalization complexity term that depends only on \(\mathcal{F}\), \(n\), and \(\delta\), but not on \(g\). 
\end{theorem}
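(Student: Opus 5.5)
The plan is the standard decomposition of excess risk into a pseudo-label bias part and an estimation part. The bias part is controlled by the Lipschitz property in Assumption~5.4. The estimation part is controlled by a uniform convergence statement, which I expect Assumption~D.1 to supply. Concretely, I would take Assumption~D.1 to say that, with probability at least \(1-\delta\),
\[
\sup_{f\in\mathcal{F}}\big|\widehat{\mathcal{R}}^{(g)}_{\mathrm{cls}}(f)-\mathcal{R}^{(g)}_{\mathrm{cls}}(f)\big|\le \mathrm{Gen}_n(\mathcal{F},\delta).
\]
Since \(\ell_{\mathrm{cls}}\) is bounded by \(M\), this follows from a Rademacher complexity bound plus McDiarmid's inequality. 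The resulting term depends only on \(\mathcal{F}\), \(n\) and \(\delta\), not on \(g\): the pseudo-labels are a fixed measurable function of \((X,C)\), so the contraction and boundedness arguments do not see \(g\).

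The first step is a transfer lemma. For every \(f\in\mathcal{F}\),
\[
\big|\mathcal{R}_{\mathrm{cls}}(f)-\mathcal{R}^{(g)}_{\mathrm{cls}}(f)\big|\le L_{\mathrm{cls}}\,\varepsilon_{\mathrm{pseudo}}(g).
\]
To prove it, move the absolute value inside the expectation and apply the label-Lipschitz inequality pointwise with \(y=Y\), \(y'=Z\) and \(\hat y=f(X)\). This gives \(\mathbb{E}[L_{\mathrm{cls}} d_{\mathrm{H}}(Y,Z)]\). Integrability is automatic because the loss is bounded by \(M\). If \(g\) is stochastic, as in Theorem~5.2(a), the expectation also runs over \(Z\), and the same inequality holds.

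The second step is the telescoping chain:
\[
\mathcal{R}_{\mathrm{cls}}(\hat f)-\mathcal{R}_{\mathrm{cls}}(f^\star)
=\big[\mathcal{R}_{\mathrm{cls}}(\hat f)-\mathcal{R}^{(g)}_{\mathrm{cls}}(\hat f)\big]
+\big[\mathcal{R}^{(g)}_{\mathrm{cls}}(\hat f)-\widehat{\mathcal{R}}^{(g)}_{\mathrm{cls}}(\hat f)\big]
+\big[\widehat{\mathcal{R}}^{(g)}_{\mathrm{cls}}(\hat f)-\widehat{\mathcal{R}}^{(g)}_{\mathrm{cls}}(f^\star)\big]
+\big[\widehat{\mathcal{R}}^{(g)}_{\mathrm{cls}}(f^\star)-\mathcal{R}^{(g)}_{\mathrm{cls}}(f^\star)\big]
+\big[\mathcal{R}^{(g)}_{\mathrm{cls}}(f^\star)-\mathcal{R}_{\mathrm{cls}}(f^\star)\big].
\]
The terms are bounded as follows:
\begin{itemize}
\item The first and fifth brackets are each at most \(L_{\mathrm{cls}}\varepsilon_{\mathrm{pseudo}}(g)\) by the transfer lemma.
\item The third bracket is \(\le 0\) because \(\hat f\) minimizes the empirical pseudo-label risk.
\item The second and fourth brackets are each at most \(\mathrm{Gen}_n(\mathcal{F},\delta)\) on the uniform-convergence event.
\end{itemize}
Summing gives \(2L_{\mathrm{cls}}\varepsilon_{\mathrm{pseudo}}(g)+2\mathrm{Gen}_n(\mathcal{F},\delta)\) with probability at least \(1-\delta\).

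The main obstacle is stating the uniform-convergence step precisely, so that \(\mathrm{Gen}_n\) is genuinely independent of \(g\) and the \(1-\delta\) event covers both \(\hat f\) and \(f^\star\) at once. Using a single two-sided uniform deviation bound over all of \(\mathcal{F}\) handles both, because the bound holds simultaneously for every \(f\) on one event. That is why I would phrase Assumption~D.1, or derive it, in that form, for example
\[
\mathrm{Gen}_n=2\mathfrak{R}_n(\ell_{\mathrm{cls}}\circ\mathcal{F})+M\sqrt{\log(2/\delta)/(2n)},
\]
where the Rademacher complexity is taken as a supremum over label assignments so that it does not depend on \(g\). Everything else is routine.
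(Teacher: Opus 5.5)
Your proposal is correct and follows the paper's proof essentially step for step: the same Lipschitz transfer lemma $|\mathcal{R}_{\mathrm{cls}}(f)-\mathcal{R}^{(g)}_{\mathrm{cls}}(f)|\le L_{\mathrm{cls}}\varepsilon_{\mathrm{pseudo}}(g)$, a two-sided Rademacher uniform-deviation bound, and the chain through the empirical-risk minimality of $\hat f$, giving $2L_{\mathrm{cls}}\varepsilon_{\mathrm{pseudo}}(g)+2\mathrm{Gen}_n(\mathcal{F},\delta)$. The one mismatch is your guess about Assumption~D.1: in the paper it is a sample-splitting assumption (Stage~1 learns $g$ on $\mathcal{D}_1$, so the Stage-2 sample is i.i.d.\ given $g$), and the uniform bound is derived from it together with boundedness by $M$, which is exactly the role your ``$g$ is a fixed measurable function'' remark plays; your label-supremum Rademacher term is also a cleaner way than the paper's empirical complexity on $(X_i,Z_i)$ to make $\mathrm{Gen}_n$ genuinely independent of $g$.
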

In words, Theorem 5.5 shows that the excess risk of \(\hat{f}\) 
based on the pseudo labels consists of two parts: a pseudo-label error term, which depends only on the disambiguation rule \(g\) through \(\varepsilon_{\mathrm{pseudo}}(g)\) and a generalization term \(\mathrm{Gen}_n(\mathcal{F},\delta)\) which depends only on the complexity of \(\mathcal{F}\) and the sample size \(n\). A detailed proof is given in Appendix D.

\section{Experiments}
\subsection{Experimental Setup}

\begin{table*}[htbp]
    \centering
  
    \label{tab:full_results}
    
    \scriptsize
    \setlength{\tabcolsep}{1.8pt}
    \renewcommand{\arraystretch}{0.82}
    \caption{Experimental results (mean ± std.) on 9 datasets showing Ranking Loss (RL $\downarrow$) and Micro-F1 (Micro-F1 $\uparrow$).}
    \begin{tabular}{@{}l*{10}{c}@{}}
        \toprule
        \textbf{Datasets} & \textbf{\textsc{POMDP-FS}} & \textbf{\textsc{PML-FSLA}} & \textbf{\textsc{PML-FSMIR}} & \textbf{\textsc{PML-FSSO}} & \textbf{\textsc{fPML}} & \textbf{\textsc{PML-LD}} & \textbf{\textsc{PAMB}} & \textbf{\textsc{PML-VLS}} & \textbf{\textsc{PML-MAP}} \\ 
        \midrule
        \multicolumn{10}{c}{\textbf{RL} ($\downarrow$)} \\ 
        \midrule
          Bibtex      & 0.16$\pm$0.05 & 0.29$\pm$0.00 & 0.17$\pm$0.08 & \textbf{0.13$\pm$0.15} & 0.44$\pm$0.09 & 0.21$\pm$0.06 & 0.37$\pm$0.27 & 0.46$\pm$0.22 & 0.60$\pm$0.16 \\
        Birds       & \textbf{0.09$\pm$0.01} & 0.26$\pm$0.04 & 0.32$\pm$0.09 & 0.33$\pm$0.08 & 1.00$\pm$0.00 & 0.39$\pm$0.01 & 0.50$\pm$0.16 & 0.59$\pm$0.12 & 0.69$\pm$0.05 \\
        CHD\_49     & \textbf{0.22$\pm$0.00} & 0.33$\pm$0.05 & 0.31$\pm$0.19 & 0.31$\pm$0.19 & 0.72$\pm$0.08 & 0.44$\pm$0.10 & 0.47$\pm$0.11 & 0.34$\pm$0.14 & 0.61$\pm$0.06 \\
        Chess       & \textbf{0.12$\pm$0.00} & 0.15$\pm$0.05 & 0.15$\pm$0.07 & 0.22$\pm$0.18 & 0.87$\pm$0.04 & 0.3$\pm$0.08 & 0.42$\pm$0.24 & 0.45$\pm$0.14 & 0.40$\pm$0.16 \\
        HumanPseAAC & \textbf{0.17$\pm$0.01} & 0.25$\pm$0.00 & 0.25$\pm$0.02 & 0.49$\pm$0.25 & 0.85$\pm$0.07 & 0.51$\pm$0.03 & 0.30$\pm$0.05 & 0.46$\pm$0.28 & 0.61$\pm$0.17 \\
        Mediamill   & \textbf{0.07$\pm$0.01} & 0.14$\pm$0.00 & 0.15$\pm$0.00 & 0.13$\pm$0.15 & 0.44$\pm$0.09 & 0.68$\pm$0.09 & 0.24$\pm$0.12 & 0.46$\pm$0.22 & 0.60$\pm$0.16 \\
        PlantPseAAC & \textbf{0.22$\pm$0.00} & 0.32$\pm$0.00 & 0.30$\pm$0.03 & 0.29$\pm$0.00 & 0.81$\pm$0.08 & 0.71$\pm$0.08 & 0.53$\pm$0.03 & 0.57$\pm$0.23 & 0.73$\pm$0.15 \\
        Slashdot    & \textbf{0.03$\pm$0.00} & 0.06$\pm$0.05 & 0.05$\pm$0.04 & 0.05$\pm$0.07 & 1.00$\pm$0.00 & 0.03$\pm$0.02 & 0.54$\pm$0.21 & 0.44$\pm$0.27 & 0.43$\pm$0.27 \\
        Yeast       & \textbf{0.19$\pm$0.01} & 0.22$\pm$0.00 & 0.50$\pm$0.03 & 0.35$\pm$0.14 & 0.34$\pm$0.12 & 0.31$\pm$0.02 & 0.31$\pm$0.04 & 0.34$\pm$0.14 & 0.62$\pm$0.01 \\
        \midrule
        \multicolumn{10}{c}{\textbf{Micro-F1} ($\uparrow$)} \\ 
        \midrule
        Bibtex      & 0.48$\pm$0.12 & 0.14$\pm$0.00 & 0.27$\pm$0.06 & 0.17$\pm$0.05 & \textbf{0.57$\pm$0.05} & 0.43$\pm$0.17 & 0.39$\pm$0.16 & 0.35$\pm$0.19 & 0.18$\pm$0.05 \\
        Birds       & \textbf{0.45$\pm$0.05} & 0.30$\pm$0.01 & 0.38$\pm$0.03 & 0.33$\pm$0.09 & 0.00$\pm$0.00 & 0.29$\pm$0.17 & 0.06$\pm$0.05 & 0.20$\pm$0.10 & 0.29$\pm$0.09 \\
        CHD\_49     & \textbf{0.68$\pm$0.00} & 0.64$\pm$0.06 & 0.60$\pm$0.19 & 0.19$\pm$0.13 & 0.12$\pm$0.07 & 0.08$\pm$0.06 & 0.16$\pm$0.10 & 0.16$\pm$0.12 & 0.24$\pm$0.12 \\
        Chess       & \textbf{0.69$\pm$0.00} & 0.58$\pm$0.06 & 0.13$\pm$0.02 & 0.04$\pm$0.00 & 0.05$\pm$0.00 & 0.42$\pm$0.13 & 0.47$\pm$0.19 & 0.50$\pm$0.13 & 0.54$\pm$0.15 \\
        HumanPseAAC & \textbf{0.23$\pm$0.04} & 0.19$\pm$0.00 & 0.18$\pm$0.07 & 0.01$\pm$0.00 & 0.00$\pm$0.00 & 0.01$\pm$0.01 & 0.05$\pm$0.04 & 0.15$\pm$0.09 & 0.19$\pm$0.11 \\
        Mediamill   & \textbf{0.63$\pm$0.04} & 0.29$\pm$0.10 & 0.00$\pm$0.00 & 0.00$\pm$0.00 & 0.14$\pm$0.03 & 0.00$\pm$0.00 & 0.01$\pm$0.00 & 0.00$\pm$0.00 & 0.00$\pm$0.00 \\
        PlantPseAAC & 0.10$\pm$0.06 & 0.18$\pm$0.01 & 0.00$\pm$0.00 & 0.00$\pm$0.00 & 0.00$\pm$0.00 & 0.09$\pm$0.05 & 0.03$\pm$0.02 & \textbf{0.23$\pm$0.08} & 0.06$\pm$0.08 \\
        Slashdot    & \textbf{0.89$\pm$0.00} & 0.18$\pm$0.00 & 0.81$\pm$0.09 & 0.00$\pm$0.00 & 0.00$\pm$0.00 & 0.37$\pm$0.25 & 0.3$\pm$0.11 & 0.40$\pm$0.23 & 0.44$\pm$0.20 \\
        Yeast       & \textbf{0.62$\pm$0.02} & 0.50$\pm$0.02 & 0.48$\pm$0.16 & 0.03$\pm$0.03 & 0.00$\pm$0.00 & 0.00$\pm$0.00 & 0.02$\pm$0.01 & 0.00$\pm$0.00 & 0.00$\pm$0.00 \\
        \bottomrule
    \end{tabular}%
\end{table*}

We evaluated our method on nine widely used partial multi-label datasets covering different domains, including an audio dataset (Birds), three biological datasets (i.e. HumanPseAAC, Yeast, and PlantPseAAC), a medical dataset (i.e. CHD\_49), two text datasets (i.e. Slashdot and Bibtex), and two image datasets (i.e. Mediamill and Chess). Details of the datasets are provided in Appendix E. We randomly injected 20\% annotation noise to construct a partial multi-label candidate set. All results are based on 5-fold cross-validation. 

We compare our approach with three PML feature selection methods (PML-FSLA \cite{pan2025reconsidering}, PML-FSMIR \cite{ijcai2025p576} and PML-FSSO \cite{hao2023partial}) and five PML learning methods (fPML \cite{yu2018feature}, PML-LD \cite{xu2020partial}, PAMB \cite{liu2023towards}, PML-VLS \cite{zhang2020partial} and PML-MAP \cite{zhang2020partial}). For the PML learning method, we derive feature ranking from its learned weight matrix, select the top 20\% of features (or the top 20 features when the original feature dimension is less than 100), and train a unified downstream SVM classifier for fair comparison. Implementation details are provided in the Appendix G. Performance metrics include Ranking Loss, Hamming Loss, Coverage Error, and Micro-F1 (lower RL/HL/CE values are better, and higher Micro-F1 values are better). 
\subsection{Experiment Results}

Table 1 summarizes the main results for Ranking Loss (RL) and Micro-F1 under 5-fold cross-validation. We use POMDP-FS as a shorthand for our approach. On nine test datasets, POMDP-FS achieves the lowest (or tied for lowest) RL values on eight datasets; the only exception is the Bibtex dataset, where PML-FSSO has an even lower RL value. This consistent advantage in RL indicates that our proposed disambiguation feature selection method better preserves the relative order of relevant labels even when the candidate label set is perturbed by noise. The improvement of POMDP-FS is particularly significant on datasets with stronger ambiguity and richer label relevance (e.g., Birds, Mediamill, and Slashdot), with RL values far lower than other methods. In contrast, some PML learning methods show a significant decrease in RL values after converting the learned weight matrix into feature ranking, suggesting that rankings generated through soft disambiguation may lead to wrong direction.

For partially multi-label data with sparse character, the F1 score is a more valuable metric because the ability to distinguish positive labels is more important. In terms of the Micro-F1 metric, POMDP-FS achieves best performance on seven out of nine datasets, indicating that ranking-level advantage often translates to better discrete predictions. This effect is particularly evident on datasets like Slashdot and Mediamill where POMDP-FS achieves high Micro-F1 scores, while some methods show significant performance degradation under noisy candidate labels. The two datasets where POMDP-FS is not best in Micro-F1 (Bibtex and PlantPseAAC) are also instructive. On the Bibtex dataset, fPML has a higher Micro-F1 score, suggesting that our method may require more favorable threshold settings or calibration for hard decisions, as it remains more reliable in ranking. On the PlantPseAAC dataset, PML-VLS has a higher Micro-F1 score, while POMDP-FS still has the lowest RL score, suggesting that performance can be further improved by refining the decision rule based on the robust ranking-guided feature subset. Further results for Coverage Error (CE) and Hamming loss (HL) are provided in the Appendix F, which also confirm the same overall trend.
\subsection{Budget Curves}
\begin{figure}[htbp]
	\centering
    \begin{subfigure}{\linewidth}
		\centering
		\includegraphics[width=\linewidth]{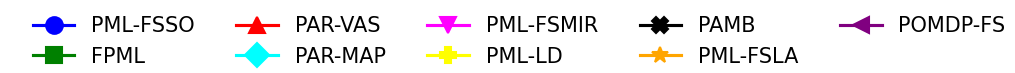}

		\label{ch}
	\end{subfigure}
	\begin{subfigure}{0.49\linewidth}
		\centering
		\includegraphics[width=\linewidth]{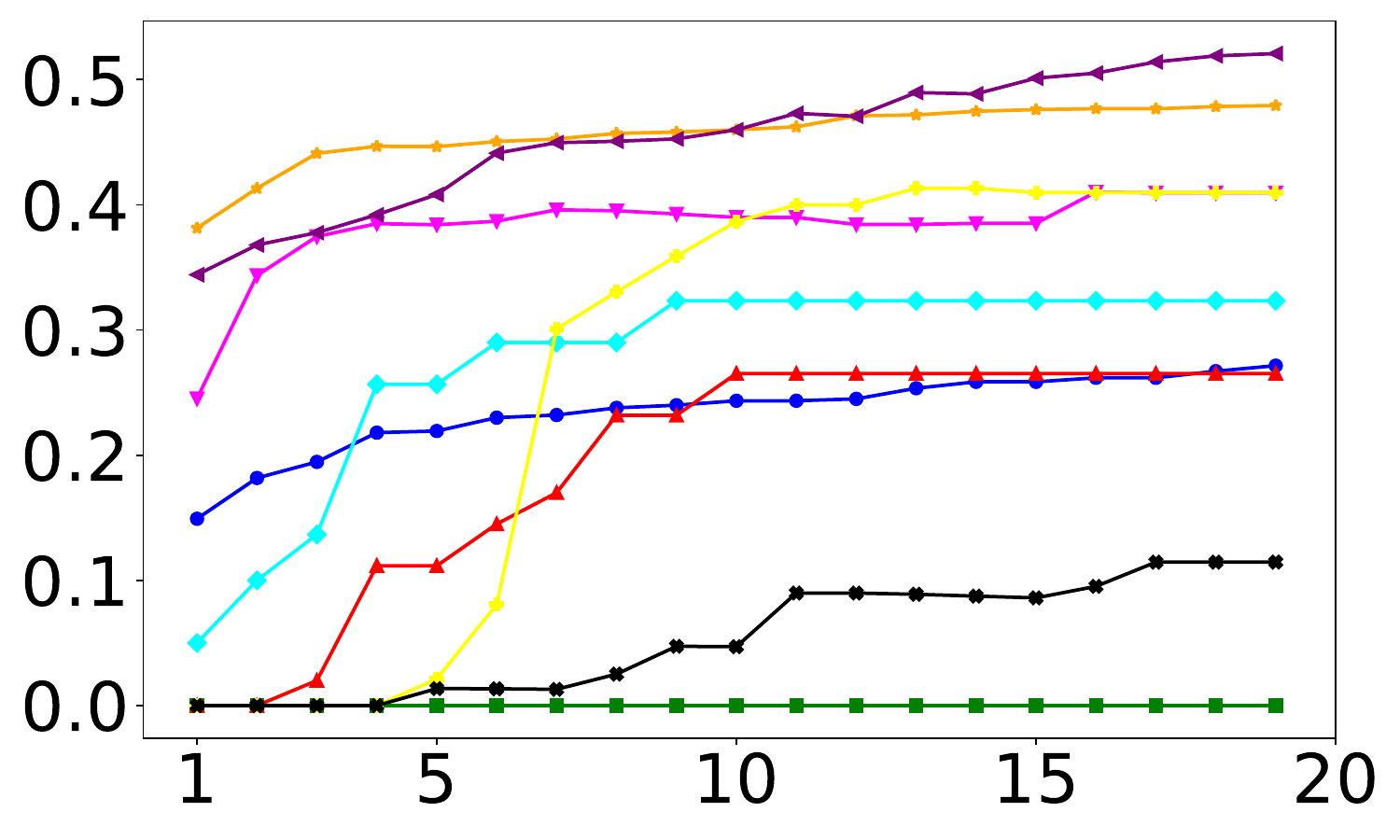}
		\caption{Micro-F1}
		\label{chutian1}
	\end{subfigure}
	\begin{subfigure}{0.49\linewidth}
		\centering
		\includegraphics[width=\linewidth]{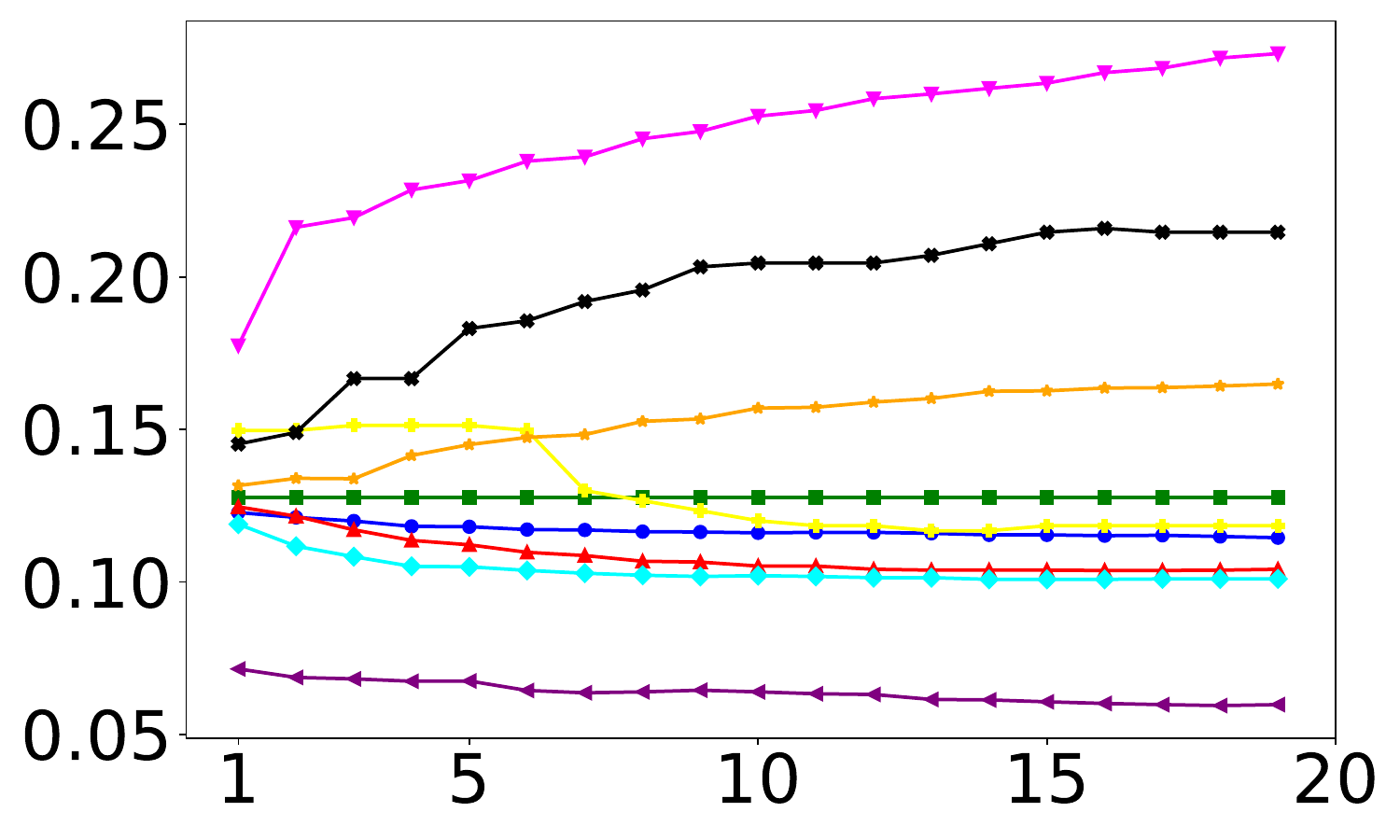}
		\caption{Hamming Loss}
		\label{chutian2}
	\end{subfigure}
	
	\begin{subfigure}{0.49\linewidth}
		\centering
		\includegraphics[width=\linewidth]{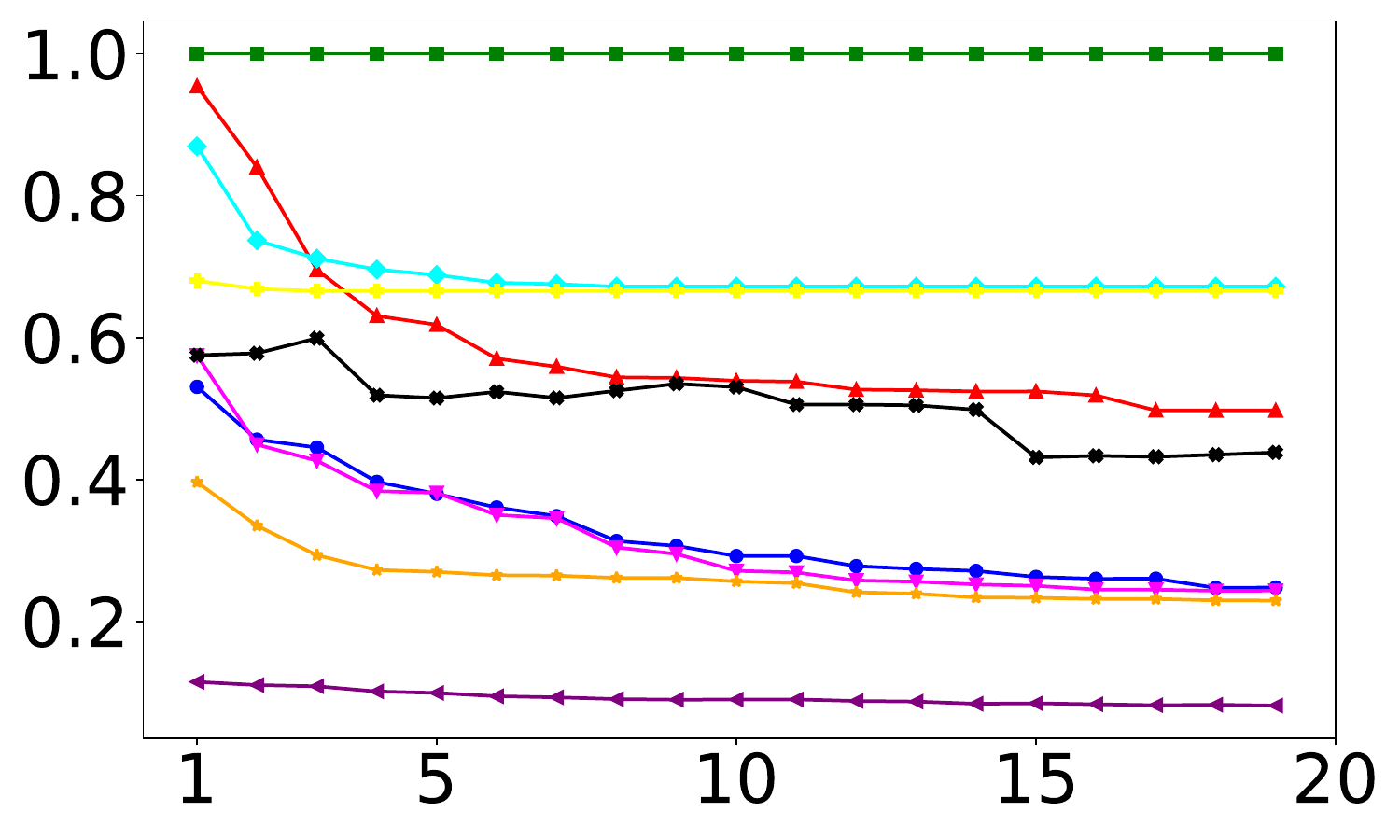}
		\caption{Ranking Loss}
		\label{chutian3}
	\end{subfigure}
	\begin{subfigure}{0.49\linewidth}
		\centering
		\includegraphics[width=\linewidth]{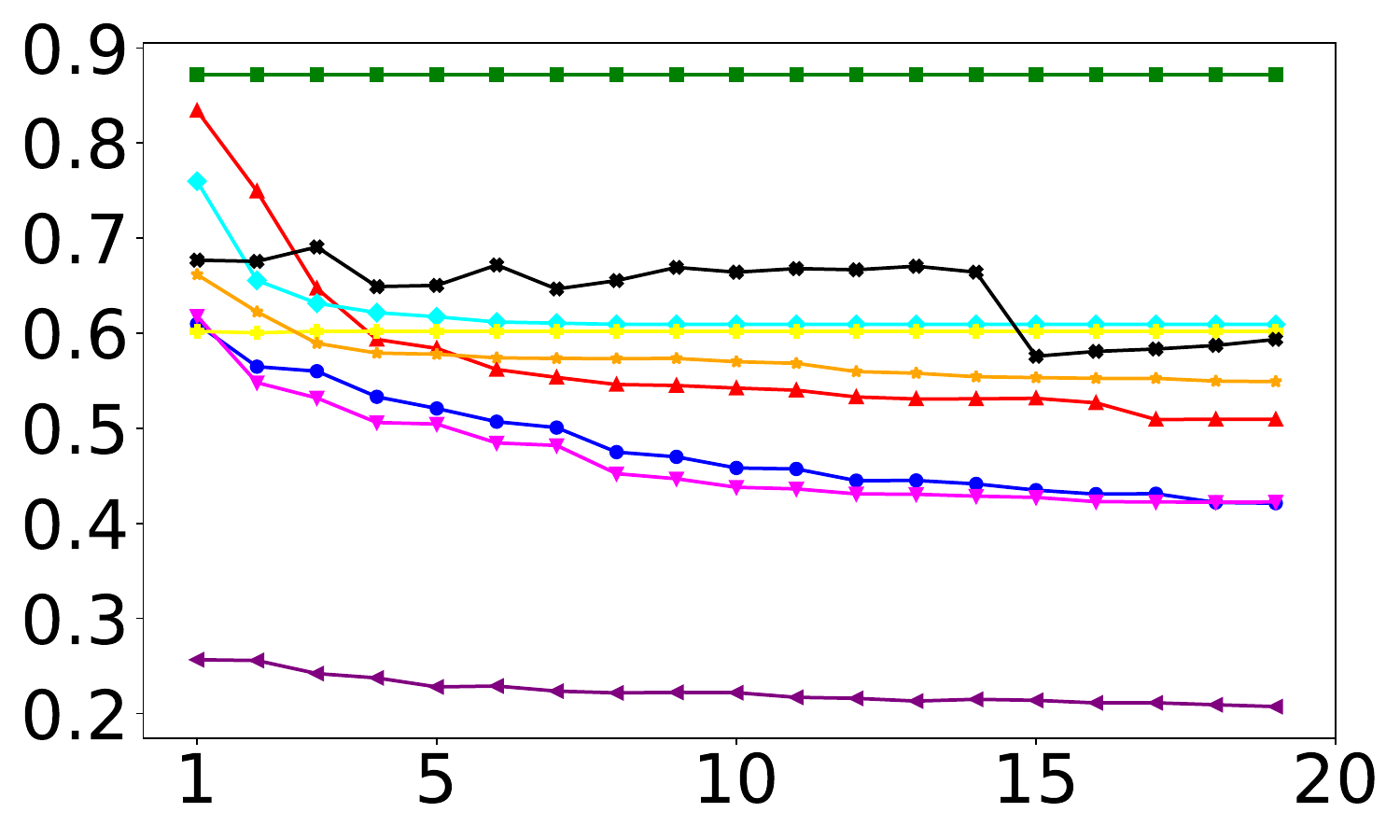}
		\caption{Coverage Error}
		\label{chutian4}
	\end{subfigure}
 \caption{ Nine methods on Birds in terms of Micro-F1, Hamming Loss, Ranking Loss and Coverage Error.}
	\label{f3}
\end{figure}
We further evaluated the impact of feature budget by scanning the top 20\% of budget levels and plotting Micro-F1 ($\uparrow$) and HL/RL/CE ($\downarrow$) curves. For all methods, increasing the budget generally improves performance (Micro-F1 increases, while HL/RL/CE decreases), but the gains diminish with high budgets. Notably, POMDP-FS exhibits the most stable and consistent performance improvement across the entire budget range, especially at low budgets, where selecting informative features is crucial. It maintains significantly lower RL values throughout the scan, indicating that it better preserves label order under feature constraints. Its Micro-F1 value steadily increases with budget, eventually reaching competitive or even optimal levels at high budgets. The budget curves on the other two datasets shown in the Appendix H also show highly similar trends.
\subsection{Statistical Significance}
To assess whether the performance differences between different datasets were statistically significant, we followed a standard nonparametric multi-dataset comparison protocol. For each metric (RL, Micro-F1, HL and CE), we first calculated the ranking of all compared methods on each dataset using average performance (averaging the rankings for those with the same rank), and then reported the average rankings across the nine datasets in Table 2. We further performed Friedman tests on each metric, where \(k = 9\) methods and \(N = 9\) datasets; the corresponding statistics and p-values are summarized in Table 3. In all four cases, the null hypothesis that all methods performed identically was rejected at the conventional significance level, indicating that the observed improvements were unlikely to be due to random differences between the datasets.

\begin{table}[htbp]
    \centering
    \scriptsize
    \setlength{\tabcolsep}{4.2pt}
    \renewcommand{\arraystretch}{0.95}
    \caption{Average ranks over nine datasets (lower is better). Each dataset is ranked using average performance for each metric; in cases of identical rankings, the average ranking is used.}
    \label{tab:avg_rank}
    \begin{tabular}{lccccc}
        \toprule
        \textbf{Method} & \textbf{RL} $\downarrow$ & \textbf{Micro-F1} $\uparrow$ & \textbf{HL} $\downarrow$ & \textbf{CE} $\downarrow$ & \textbf{Avg.} \\
        \midrule
        \textsc{POMDP-FS}   & \textbf{1.167} & \textbf{1.333} & \textbf{1.667} & 3.500 & \textbf{1.917} \\
        \textsc{PML-FSLA}   & 3.333 & 3.556 & 8.111 & 3.278 & 4.569 \\
        \textsc{PML-FSMIR}  & 3.556 & 4.667 & 5.000 & \textbf{1.833} & 3.764 \\
        \textsc{PML-FSSO}   & 3.556 & 6.722 & 4.167 & 3.611 & 4.514 \\
        \textsc{fPML}       & 8.000 & 6.889 & 4.333 & 7.222 & 6.611 \\
        \textsc{PML-LD}     & 5.333 & 6.000 & 6.333 & 5.722 & 5.847 \\
        \textsc{PAMB}       & 5.722 & 5.611 & 6.500 & 6.667 & 6.125 \\
        \textsc{PML-VLS}    & 6.556 & 5.222 & 4.333 & 6.056 & 5.542 \\
        \textsc{PML-MAP}    & 7.778 & 5.000 & 4.556 & 7.111 & 6.111 \\
        \bottomrule
    \end{tabular}
\end{table}

\begin{table}[htbp]
    \centering
    \scriptsize
    \setlength{\tabcolsep}{6.0pt}
    \renewcommand{\arraystretch}{0.95}
    \caption{Friedman test results across nine datasets ($N=9$) and nine methods ($k=9$). We also report the Iman--Davenport corrected $F$ statistic with $(8,\,64)$ degrees of freedom.}
    \label{tab:friedman_test}
    \begin{tabular}{lcccc}
        \toprule
        \textbf{Metric} & $\boldsymbol{\chi^2_F}$ & $\boldsymbol{p}$ & $\boldsymbol{F_F}$ & $\boldsymbol{p}$ \\
        \midrule
        RL        & 49.70 & $4.67\times 10^{-8}$ & 17.83 & $1.16\times 10^{-13}$ \\
        Micro-F1  & 28.32 & $4.17\times 10^{-4}$ & 5.19  & $5.21\times 10^{-5}$ \\
        HL        & 32.19 & $8.63\times 10^{-5}$ & 6.47  & $3.83\times 10^{-6}$ \\
        CE        & 37.18 & $1.07\times 10^{-5}$ & 8.54  & $7.89\times 10^{-8}$ \\
        \bottomrule
    \end{tabular}
\end{table}

\subsection{Ablation Study}
To determine the effectiveness of the proposed Stage 1 hard disambiguation, we  replaced it with a soft disambiguation variant while keeping the rest of the process unchanged. Table 4 reports the results on four representative datasets using Micro-F1 ($\uparrow$), CE ($\downarrow$), and RL ($\downarrow$). Hard disambiguation consistently matches or slightly improves upon soft disambiguation across all three metrics, confirming that making hard decisions can benefit downstream learning when candidate label noise is high. However, the improvements are generally small, meaning this part alone cannot explain most of the overall performance gains. This suggests that the main improvement comes from the overall framework inspired by POMDP and its interaction with subsequent stages. It also highlights an important direction for future work: designing more informative hard disambiguation strategies so that the first-stage commitment can better utilize the label structure.

\begin{table}[htbp]
\centering
\scriptsize
\setlength{\tabcolsep}{3.2pt}
\renewcommand{\arraystretch}{0.92}
\caption{Ablation on stage-1 disambiguation.}
\label{tab:ablation_soft_hard}
\begin{tabular}{@{}lccccc ccccc@{}}
\toprule
 & \multicolumn{5}{c}{\textbf{Bibtex}} & \multicolumn{5}{c}{\textbf{Birds}} \\
\midrule
 & \textbf{Soft} & \textbf{Hard} &  &  &  & \textbf{Soft} & \textbf{Hard} &  &  &  \\
\midrule
Micro-F1 ($\uparrow$) & \textbf{0.48$\pm$0.14} & \textbf{0.48$\pm$0.12} & \multicolumn{3}{c}{} & 0.38$\pm$0.05 & \textbf{0.45$\pm$0.05} & \multicolumn{3}{c}{} \\
CE ($\downarrow$)     & 0.24$\pm$0.05 & \textbf{0.23$\pm$0.05} & \multicolumn{3}{c}{} & 0.23$\pm$0.01 & \textbf{0.22$\pm$0.01} & \multicolumn{3}{c}{} \\
RL ($\downarrow$)     & 0.17$\pm$0.05 & \textbf{0.16$\pm$0.05} & \multicolumn{3}{c}{} & 0.10$\pm$0.01 & \textbf{0.09$\pm$0.01} & \multicolumn{3}{c}{} \\
\midrule
 & \multicolumn{5}{c}{\textbf{HumanPseAAC}} & \multicolumn{5}{c}{\textbf{Mediamill}} \\
\midrule
 & \textbf{Soft} & \textbf{Hard} &  &  &  & \textbf{Soft} & \textbf{Hard} &  &  &  \\
\midrule
Micro-F1 ($\uparrow$) & 0.22$\pm$0.06 & \textbf{0.23$\pm$0.04} & \multicolumn{3}{c}{} & 0.61$\pm$0.04 & \textbf{0.63$\pm$0.04} & \multicolumn{3}{c}{} \\
CE ($\downarrow$)     & 0.26$\pm$0.01 & \textbf{0.25$\pm$0.01} & \multicolumn{3}{c}{} & \textbf{0.14$\pm$0.01} & \textbf{0.14$\pm$0.01} & \multicolumn{3}{c}{} \\
RL ($\downarrow$)     & \textbf{0.17$\pm$0.01} & \textbf{0.17$\pm$0.01} & \multicolumn{3}{c}{} & 0.08$\pm$0.01 & \textbf{0.07$\pm$0.01} & \multicolumn{3}{c}{} \\
\bottomrule
\end{tabular}
\vspace{-1.2mm}
\end{table}

\section{Conclusion}
In this paper we addressed the partially multi-label feature selection problem, where unknown true labels need to be disambiguated, and errors may influence the feature selection process. From the perspective of POMDP, we propose a unified two-stage policy learning framework: Stage 1 transforms the candidate-constrained disambiguation problem into a horizon-1 POMDP to generate hard pseudo-labels; Stage 2 performs budget feature selection, which is a  decision process optimized by policy gradients, thus obtaining interpretable feature ranking under weak supervision.

Our analysis establishes the equivalence between the disambiguation risk and reward of the Stage 1 formula, providing an optimization guarantee for the Stage 1 under standard smoothing conditions. Furthermore, we derive an excess risk decomposition for downstream learning, separating the pseudo-label quality term from the usual generalization term. Experiments on the PML dataset validate the superiority of our method. In the future, we will further investigate the combination of reinforcement learning with partially labeled data scenarios, such as variance-reducing reinforcement learning training and extending disambiguation methods beyond hard first-stage disambiguation.

\section*{Impact Statement}
This paper presents work whose goal is to advance the field of Machine Learning. There are many potential societal consequences of our work, none which we feel must be specifically highlighted here.


\bibliography{example_paper}

@inproceedings{xie2018partial,
  title={Partial multi-label learning},
  author={Xie, Ming-Kun and Huang, Sheng-Jun},
  booktitle={Proceedings of the AAAI conference on artificial intelligence},
  volume={32},
  number={1},
  year={2018}
}

@inproceedings{yang2024noisy,
  title={Noisy label removal for partial multi-label learning},
  author={Yang, Fuchao and Jia, Yuheng and Liu, Hui and Dong, Yongqiang and Hou, Junhui},
  booktitle={Proceedings of the 30th ACM SIGKDD Conference on Knowledge Discovery and Data Mining},
  pages={3724--3735},
  year={2024}
}

@article{chen2022representation,
  title={Representation learning from noisy user-tagged data for sentiment classification},
  author={Chen, Long and Wang, Fei and Yang, Ruijing and Xie, Fei and Wang, Wenjing and Xu, Cai and Zhao, Wei and Guan, Ziyu},
  journal={International Journal of Machine Learning and Cybernetics},
  volume={13},
  number={12},
  pages={3727--3742},
  year={2022},
  publisher={Springer}
}

@article{murshed2022enhancing,
  title={Enhancing big social media data quality for use in short-text topic modeling},
  author={Murshed, Belal Abdullah Hezam and Abawajy, Jemal and Mallappa, Suresha and Saif, Mufeed Ahmed Naji and Al-Ghuribi, Sumaia Mohammed and Ghanem, Fahd A},
  journal={Ieee Access},
  volume={10},
  pages={105328--105351},
  year={2022},
  publisher={IEEE}
}

@inproceedings{tejero2023full,
  title={Full or Weak annotations? An adaptive strategy for budget-constrained annotation campaigns},
  author={Tejero, Javier Gamazo and Zinkernagel, Martin S and Wolf, Sebastian and Sznitman, Raphael and M{\'a}rquez-Neila, Pablo},
  booktitle={Proceedings of the IEEE/CVF Conference on Computer Vision and Pattern Recognition},
  pages={11381--11391},
  year={2023}
}

@inproceedings{mazzamuto2022weakly,
  title={Weakly supervised attended object detection using gaze data as annotations},
  author={Mazzamuto, Michele and Ragusa, Francesco and Furnari, Antonino and Signorello, Giovanni and Farinella, Giovanni Maria},
  booktitle={International Conference on Image Analysis and Processing},
  pages={263--274},
  year={2022},
  organization={Springer}
}

@article{mongardi2024biologically,
  title={Biologically weighted LASSO: enhancing functional interpretability in gene expression data analysis},
  author={Mongardi, Sofia and Cascianelli, Silvia and Masseroli, Marco},
  journal={Bioinformatics},
  volume={40},
  number={10},
  pages={btae605},
  year={2024},
  publisher={Oxford University Press}
}

@article{smarandache2024soft,
  title={Soft sets extensions used in bioinformatics},
  author={Smarandache, Florentin and Gifu, Daniela},
  journal={Procedia Computer Science},
  volume={246},
  pages={2185--2193},
  year={2024},
  publisher={Elsevier}
}

@article{hang2023partial,
  title={Partial multi-label learning with probabilistic graphical disambiguation},
  author={Hang, Jun-Yi and Zhang, Min-Ling},
  journal={Advances in Neural Information Processing Systems},
  volume={36},
  pages={1339--1351},
  year={2023}
}

@article{zhong2024negative,
  title={Negative label and noise information guided disambiguation for partial multi-label learning},
  author={Zhong, Jingyu and Shang, Ronghua and Zhao, Feng and Zhang, Weitong and Xu, Songhua},
  journal={IEEE Transactions on Multimedia},
  volume={26},
  pages={9920--9935},
  year={2024},
  publisher={IEEE}
}

@inproceedings{li2025calibrated,
  title={Calibrated disambiguation for partial multi-label learning},
  author={Li, Zhuoming and Jia, Yuheng and Yu, Mi and Miao, Zicong},
  booktitle={Proceedings of the AAAI Conference on Artificial Intelligence},
  volume={39},
  number={17},
  pages={18620--18628},
  year={2025}
}

@inproceedings{wang2022partial,
  title={Partial multi-label feature selection},
  author={Wang, Jing and Li, Peipei and Yu, Kui},
  booktitle={2022 International Joint Conference on Neural Networks (IJCNN)},
  pages={1--9},
  year={2022},
  organization={IEEE}
}

@inproceedings{pan2025reconsidering,
  title={Reconsidering Feature Structure Information and Latent Space Alignment in Partial Multi-label Feature Selection},
  author={Pan, Hanlin and Liu, Kunpeng and Gao, Wanfu},
  booktitle={Proceedings of the AAAI Conference on Artificial Intelligence},
  volume={39},
  number={19},
  pages={19786--19794},
  year={2025}
}

@article{zhang2020partial,
  title={Partial multi-label learning via credible label elicitation},
  author={Zhang, Min-Ling and Fang, Jun-Peng},
  journal={IEEE Transactions on Pattern Analysis and Machine Intelligence},
  volume={43},
  number={10},
  pages={3587--3599},
  year={2020},
  publisher={IEEE}
}

@inproceedings{xu2020partial,
  title={Partial multi-label learning with label distribution},
  author={Xu, Ning and Liu, Yun-Peng and Geng, Xin},
  booktitle={Proceedings of the AAAI conference on artificial intelligence},
  volume={34},
  number={04},
  pages={6510--6517},
  year={2020}
}

@article{xie2021partial,
  title={Partial multi-label learning with noisy label identification},
  author={Xie, Ming-Kun and Huang, Sheng-Jun},
  journal={IEEE Transactions on Pattern Analysis and Machine Intelligence},
  volume={44},
  number={7},
  pages={3676--3687},
  year={2021},
  publisher={IEEE}
}

@article{liang2024partial,
  title={Partial multi-label learning via exploiting instance and label correlations},
  author={Liang, Weichao and Gao, Guangliang and Chen, Lei and Wang, Youquan},
  journal={ACM Transactions on Knowledge Discovery from Data},
  volume={19},
  number={1},
  pages={1--22},
  year={2024},
  publisher={ACM New York, NY}
}

@inproceedings{cassandra1998survey,
  title={A survey of POMDP applications},
  author={Cassandra, Anthony R},
  booktitle={Working notes of AAAI 1998 fall symposium on planning with partially observable Markov decision processes},
  volume={1724},
  year={1998}
}

@article{arcieri2024pomdp,
  title={POMDP inference and robust solution via deep reinforcement learning: An application to railway optimal maintenance},
  author={Arcieri, Giacomo and Hoelzl, Cyprien and Schwery, Oliver and Straub, Daniel and Papakonstantinou, Konstantinos G and Chatzi, Eleni},
  journal={Machine Learning},
  volume={113},
  number={10},
  pages={7967--7995},
  year={2024},
  publisher={Springer}
}

@inproceedings{durand2019learning,
  title={Learning a deep convnet for multi-label classification with partial labels},
  author={Durand, Thibaut and Mehrasa, Nazanin and Mori, Greg},
  booktitle={Proceedings of the IEEE/CVF conference on computer vision and pattern recognition},
  pages={647--657},
  year={2019}
}

@article{wang2020semi,
  title={Semi-supervised partial label learning via confidence-rated margin maximization},
  author={Wang, Wei and Zhang, Min-Ling},
  journal={Advances in neural information processing systems},
  volume={33},
  pages={6982--6993},
  year={2020}
}

@article{xu2021progressive,
  title={Progressive enhancement of label distributions for partial multilabel learning},
  author={Xu, Ning and Liu, Yun-Peng and Zhang, Yan and Geng, Xin},
  journal={IEEE Transactions on Neural Networks and Learning Systems},
  volume={34},
  number={8},
  pages={4856--4867},
  year={2021},
  publisher={IEEE}
}

@article{gong2021understanding,
  title={Understanding partial multi-label learning via mutual information},
  author={Gong, Xiuwen and Yuan, Dong and Bao, Wei},
  journal={Advances in Neural Information Processing Systems},
  volume={34},
  pages={4147--4156},
  year={2021}
}

@inproceedings{duarte2021plm,
  title={Plm: Partial label masking for imbalanced multi-label classification},
  author={Duarte, Kevin and Rawat, Yogesh and Shah, Mubarak},
  booktitle={Proceedings of the IEEE/CVF Conference on Computer Vision and Pattern Recognition},
  pages={2739--2748},
  year={2021}
}

@inproceedings{feng2019partial,
  title={Partial label learning with self-guided retraining},
  author={Feng, Lei and An, Bo},
  booktitle={Proceedings of the AAAI conference on artificial intelligence},
  volume={33},
  number={01},
  pages={3542--3549},
  year={2019}
}

@inproceedings{lyu2020partial,
  title={Partial multi-label learning via probabilistic graph matching mechanism},
  author={Lyu, Gengyu and Feng, Songhe and Li, Yidong},
  booktitle={Proceedings of the 26th ACM SIGKDD International Conference on Knowledge Discovery \& Data Mining},
  pages={105--113},
  year={2020}
}

@article{sun2021global,
  title={Global-local label correlation for partial multi-label learning},
  author={Sun, Lijuan and Feng, Songhe and Liu, Jun and Lyu, Gengyu and Lang, Congyan},
  journal={IEEE Transactions on Multimedia},
  volume={24},
  pages={581--593},
  year={2021},
  publisher={IEEE}
}

@inproceedings{ijcai2025p576,
  title     = {Noise-Resistant Label Reconstruction Feature Selection for Partial Multi-Label Learning },
  author    = {Gao, Wanfu and Pan, Hanlin and Han, Qingqi and Liu, Kunpeng},
  booktitle = {Proceedings of the Thirty-Fourth International Joint Conference on
               Artificial Intelligence, {IJCAI-25}},
  publisher = {International Joint Conferences on Artificial Intelligence Organization},
  editor    = {James Kwok},
  pages     = {5172--5180},
  year      = {2025},
  month     = {8},
  note      = {Main Track},
  doi       = {10.24963/ijcai.2025/576},
  url       = {https://doi.org/10.24963/ijcai.2025/576},
}

@article{wu2025partial,
  title={Partial multi-label feature selection with feature noise},
  author={Wu, You and Li, Peipei and Zou, Yizhang},
  journal={Pattern Recognition},
  volume={162},
  pages={111310},
  year={2025},
  publisher={Elsevier}
}

@article{hao2025embedded,
  title={Embedded feature fusion for multi-view multi-label feature selection},
  author={Hao, Pingting and Gao, Wanfu and Hu, Liang},
  journal={Pattern Recognition},
  volume={157},
  pages={110888},
  year={2025},
  publisher={Elsevier}
}

@article{liu2021automated,
  title={Automated feature selection: A reinforcement learning perspective},
  author={Liu, Kunpeng and Fu, Yanjie and Wu, Le and Li, Xiaolin and Aggarwal, Charu and Xiong, Hui},
  journal={IEEE Transactions on Knowledge and Data Engineering},
  volume={35},
  number={3},
  pages={2272--2284},
  year={2021},
  publisher={IEEE}
}

@article{hao2023partial,
  title={Partial multi-label feature selection via subspace optimization},
  author={Hao, Pingting and Hu, Liang and Gao, Wanfu},
  journal={Information Sciences},
  volume={648},
  pages={119556},
  year={2023},
  publisher={Elsevier}
}

@inproceedings{yu2018feature,
  title={Feature-induced partial multi-label learning},
  author={Yu, Guoxian and Chen, Xia and Domeniconi, Carlotta and Wang, Jun and Li, Zhao and Zhang, Zili and Wu, Xindong},
  booktitle={2018 IEEE international conference on data mining (ICDM)},
  pages={1398--1403},
  year={2018},
  organization={IEEE}
}

@article{liu2023towards,
  title={Towards enabling binary decomposition for partial multi-label learning},
  author={Liu, Bing-Qing and Jia, Bin-Bin and Zhang, Min-Ling},
  journal={IEEE transactions on pattern analysis and machine intelligence},
  volume={45},
  number={11},
  pages={13203--13217},
  year={2023},
  publisher={IEEE}
}
\bibliographystyle{icml2026}

\newpage
\clearpage
\appendix
\onecolumn
\section{Pseudo Code of Algorithm}
\begin{algorithm}[!htbp]
  \caption{Two-Stage Horizon-1 POMDP Disambiguation + RL Feature Selection}
  \label{alg:two_stage_appendix}
  \begin{algorithmic}
    \STATE {\bfseries Input:} training set $\mathcal{D}=\{(x_i,C_i)\}_{i=1}^n$, label size $L$, feature dim $d$, budgets $k_{\mathrm{fs}}$, epochs $E_1,E_2$, 
     learning rates $\eta_{\mathrm{disc}},\eta_{\mathrm{pol}},\eta_{\mathrm{fs}}$, weights $\lambda_{\mathrm{struct}},\lambda_{\mathrm{fs}}$
    \STATE {\bfseries Output:} hard pseudo labels $\{\hat{Y}_i\}_{i=1}^n$, selected feature set $S$ (or ranking $\rho$)

    \STATE \textbf{Initialize} encoder $f_{\mathrm{enc}}$, disambiguation head $h_{\mathrm{disc}}$
    \STATE \textbf{Initialize} label policy $\pi_{\theta}$, feature policy $\pi_{\psi}$, baseline $b \leftarrow 0$

    \STATE {\bfseries Stage 1: Horizon-1 POMDP for hard label disambiguation}
    \FOR{$e=1$ {\bfseries to} $E_1$}
      \FOR{each minibatch $\mathcal{B}\subset \mathcal{D}$}
        \STATE Construct observation $O_i=(x_i,C_i)$ for each $(x_i,C_i)\in\mathcal{B}$
        \STATE Compute representation $\phi_i \leftarrow f_{\mathrm{enc}}(O_i)$
        \STATE Compute label probabilities $p_{i,j} \leftarrow \pi_{\theta}(j\mid O_i)$ for $j\in C_i$, and set $p_{i,j}\leftarrow 0$ for $j\notin C_i$
        \STATE Sample hard action $Z_i \sim \prod_{j=1}^{L}\mathrm{Bernoulli}(p_{i,j}) \ \ $ (a hard pseudo label vector)
        \STATE Predict $U_i \leftarrow h_{\mathrm{disc}}(\phi_i)$
        \STATE Compute disambiguation loss according to Formula (7)
        \STATE Update $(f_{\mathrm{enc}},h_{\mathrm{disc}})$ by gradient descent on $\mathcal{L}_{\mathrm{disc}}$
        \STATE Set reward $r \leftarrow -\mathcal{L}_{\mathrm{disc}}$
        \STATE Update label policy $\theta$ by policy gradient using $\log \pi_{\theta}(Z_i\mid O_i)$ and reward $r$
      \ENDFOR
    \ENDFOR

    \STATE {\bfseries Export hard pseudo labels}
    \FOR{each training instance $(x_i,C_i)\in\mathcal{D}$}
      \STATE Compute $p_{i,j} \leftarrow \pi_{\theta}(j\mid (x_i,C_i))$
      \STATE Set $\hat{Y}_{i,j}\leftarrow 1$ if $p_{i,j}\ge 0.5$ else $\hat{Y}_{i,j}\leftarrow 0$
    \ENDFOR

    \STATE {\bfseries Stage 2: RL-based sequential feature selection}
    \STATE Set horizon $T \leftarrow \min(k_{\mathrm{fs}}, d)$
    \FOR{$e=1$ {\bfseries to} $E_2$}
      \FOR{each minibatch $\mathcal{B}'=\{(x_i,\hat{Y}_i)\}$}
        \STATE Initialize mask $m_i \leftarrow \mathbf{0}_d$ and selected set $S_i \leftarrow \emptyset$ for each $i\in\mathcal{B}'$
        \STATE Initialize accumulator $\log\Pi \leftarrow 0$
        \FOR{$t=1$ {\bfseries to} $T$}
          \STATE Form partial observation $\tilde{x}_{i} \leftarrow m_i \odot x_i$
          \STATE Compute state embedding $h_{i} \leftarrow f_{\mathrm{enc}}(\tilde{x}_{i})$
          \STATE Sample feature action $a_i \sim \pi_{\psi}(\cdot \mid h_i)$ with already-selected dims masked out
          \STATE Update $S_i \leftarrow S_i \cup \{a_i\}$ and set $m_i[a_i]\leftarrow 1$
          \STATE Accumulate $\log\Pi \leftarrow \log\Pi + \frac{1}{|\mathcal{B}'|}\sum_{i\in\mathcal{B}'} \log \pi_{\psi}(a_i\mid h_i)$
        \ENDFOR

        \STATE Train predictor on selected features and compute supervised loss according to Formula (14)
       
        \STATE Set reward $r \leftarrow -\mathcal{L}_{\mathrm{sup}}$ and advantage $A \leftarrow r - b$
        \STATE Policy loss $\mathcal{L}_{\mathrm{pg}} \leftarrow -A\cdot \log\Pi$
        \STATE Total Stage-2 loss $\mathcal{L}_{\mathrm{fs}} \leftarrow \mathcal{L}_{\mathrm{sup}} + \lambda_{\mathrm{fs}}\mathcal{L}_{\mathrm{pg}}$
        \STATE Update $(f_{\mathrm{enc}},h_{\mathrm{disc}},\psi)$ by gradient descent on $\mathcal{L}_{\mathrm{fs}}$
        \STATE Update baseline $b \leftarrow (1-\beta)b + \beta r$
      \ENDFOR
    \ENDFOR

    \STATE Aggregate $\{S_i\}$ into global ranking $\rho$ (e.g., by selection frequency) and output $S$ (top-$k_{\mathrm{fs}}$)
  \end{algorithmic}
\end{algorithm}

\section{Proof of Theorem 5.2}

We  completely describe the relevant definitions in this section, and then prove (a) - (c) in sequence.

\subsection{Preliminaries}

PML data consists of triples \((X,C)\), where \(X\in\mathcal{X}\in\mathbb{R}^d\), \(Y\in\{0,1\}^L\), and \(C\) are candidate label sets, whose goal is to construct a function about \(X\). The surrogate disambiguation loss:
\begin{equation}
\ell_{\mathrm{disc}} : \mathcal{X} \times \{0,1\}^L \times 2^{[L]} \times {0,1}^L \to \mathbb{R}_+
\end{equation}
takes as input \((X,C,Z)\), where \(Z\in{0,1}^L\) satisfies \(Z_j=0\) whenever \(j\notin C\), and outputs a non-negative loss that measures the difference between \(Z\) and \(Y\). This loss function may contain structure terms that depend on \((X,C)\). We assume throughout that \(\ell{\mathrm{disc}}\) is measurable and integrable with respect to the relevant joint distributions.

A disambiguation rule is a mapping
\begin{equation}
g : \mathcal{X} \times 2^{[L]} \to \{0,1\}^L
\end{equation}
such that \(g(X,C)_j = 0\) whenever \(j\notin C\). We allow \(g\) to be random, i.e., its output may depend on an auxiliary source of randomness independent of \((X,Y,C)\), but we require that for each fixed \((x,c)\) the distribution of \(g(x,c)\) is supported on
\begin{equation}
\mathcal{A}(x, c)=\left\{z \in\{0,1\}^L: z_j=0 \text { whenever } j \notin c\right\} .
\end{equation}
Let \(\mathcal{G}\) denote the class of all such disambiguation rules. Given \(\ell_{\mathrm{disc}}\), the PML disambiguation risk of \(g\in\mathcal{G}\) is
\begin{equation}
\mathcal{R}_{\mathrm{PML}}(g)
= \mathbb{E}_{(X,C)}\big[\ell_{\mathrm{disc}}\big(g(X,C),Z\big)\big],
\end{equation}

The horizon-1 POMDP of Section~4.1 is defined as follows. The latent state is \(S=(X,Y,C)\), the observation is \(O=(X,C)\), the actions given \((X,C)\) are hard label vectors \(Z\in\mathcal{A}(X,C)\), and the one-step reward is
\begin{equation}
r(S,Z) = -\ell_{\mathrm{disc}}\big(g(X,C),C\big)\
\end{equation}
A stochastic policy \(\pi\) assigns a probability distribution \(\pi(\cdot\mid X,C)\) to each observation \(O=(X,C)\), defined on \(\mathcal{A}(X,C)\). Let \(\Pi\) denote the class of such policies. The joint distribution of \((X,C,Z)\) induced by \(\pi\) is the product of \(\mathbb{P}\) on \((X,C)\) and \(\pi(\cdot\mid X,C)\) on \(Z\). For \(\pi\in\Pi\), its expected return is:
\begin{equation}
J(\pi)
= \mathbb{E}{(S,O,Z)\sim\pi}[r(S,Z)]
= \mathbb{E}_{(X,C)}\big[r(S,Z)\big]
= -\mathbb{E}_{(X,C)}\big[\ell_{\mathrm{disc}}(X,C,Z)\big].
\end{equation}

\subsection{Proof of Theorem 5.2}

We now prove the three parts of Theorem 5.2.

\paragraph{Proof of (a).}
For any \(\pi\in\Pi\), \(\pi\) specifies the conditional distribution \(\pi(\cdot\mid x,c)\) on \(\mathcal{A}(x,c)\) for each \((x,c)\). Its disambiguation rule is \(g_\pi\):
\begin{equation}
g_\pi(x,c) := Z, \quad Z \sim \pi(\cdot\mid x,c).
\end{equation}
According to the construction, \(g_\pi(x,c)\in\mathcal{A}(x,c)\) is almost certainly true, therefore \(g_\pi\in\mathcal{G}\).

Consider the joint distribution of \((X,C,Z)\) obtained by first sampling \((X,C)\sim\mathbb{P}\) and then sampling \(Z\sim\pi(\cdot\mid X,C)\). Under this joint distribution, the risk is:
\begin{equation}
\mathcal{R}_{\mathrm{PML}}(g_\pi)
= \mathbb{E}_{(X,C)}\big[\ell_{\mathrm{disc}}(X,C, g(X,C))\big]
= \mathbb{E}_{(X,C)}\big[\ell_{\mathrm{disc}}\big(X,C,Z\big)\big]
,
\end{equation}
the second equation holds because the conditional distribution of \(Z\) given \((X,C)\) is \(\pi(\cdot\mid X,C)\), which is the same as the conditional distribution in POMDP. On the other hand, the reward is:
\begin{equation}
J(\pi)
= -\mathbb{E}_{(X,C)}\big[\ell_{\mathrm{disc}}(X,C,Z)\big].
\end{equation}
Hence
\begin{equation}
J(\pi) = -\mathcal{R}_{\mathrm{PML}}(g_\pi),
\end{equation}
which proves part (a).

\paragraph{Proof of (b).}
For any \(g\in\mathcal{G}\), let \(\mathbb{Q}{g(\cdot\mid x,c)}\) denote the distribution of \(Z = g(x,c)\) on \(\mathcal{A}(x,c)\) for every \((x,c)\), which is induced by the internal randomness of \(g\). Define the policy \(\pi_g\) as:
\begin{equation}
\pi_g(\cdot\mid x,c) := \mathbb{Q}_{g(\cdot\mid x,c)}.
\end{equation}
Then, for each \((X,C)\), the distribution of \(Z\sim\pi_g(\cdot\mid X,C)\) coincides with that of \(g(X,C)\). In particular,
\begin{equation}
g_{\pi_g}(X,C) := Z,\quad Z\sim\pi_g(\cdot\mid X,C)
\end{equation}
has the same distribution as \(g(X,C)\), so \(g_{\pi_g} = g\) almost surely.

With \((X,C,Z)\) sampled from \((X,C)\sim\mathbb{P}\) and \(Z\sim\pi_g(\cdot\mid X,C)\), we have:
\begin{equation}
\mathcal{R}_{\mathrm{PML}}(g)
= \mathbb{E}_{(X,C)}\big[\ell_{\mathrm{disc}}(X,C, g(X,C))\big]
= \mathbb{E}_{(X,C)}\big[\ell_{\mathrm{disc}}(X,C,Z)\big],
\end{equation}
and
\begin{equation}
J(\pi_g)
= -\mathbb{E}_{(X,C)}\big[\ell_{\mathrm{disc}}(X,C,Z)\big].
\end{equation}
Therefore
\begin{equation}
J(\pi_g) = -\mathcal{R}_{\mathrm{PML}}(g),
\end{equation}
which proves part (b).

\paragraph{Proof of (c).}
Define
\begin{equation}
\mathcal{R}_{\mathrm{PML}}^\star := \inf_{g\in\mathcal{G}} \mathcal{R}_{\mathrm{PML}}(g),
\qquad
J^\star := \sup_{\pi\in\Pi} J(\pi).
\end{equation}
By part (a), for any \(\pi\in\Pi\),
\begin{equation}
J(\pi) = -\mathcal{R}_{\mathrm{PML}}(g_\pi),
\end{equation}
so
\begin{equation}
J^\star
= \sup_{\pi\in\Pi} J(\pi)
= \sup_{\pi\in\Pi} \big(-\mathcal{R}_{\mathrm{PML}}(g_\pi)\big)
\le -\inf_{g\in\mathcal{G}} \mathcal{R}_{\mathrm{PML}}(g)
= -\mathcal{R}_{\mathrm{PML}}^\star.
\end{equation}

Conversely, by part (a), for any \(g\in\mathcal{G}\),
\begin{equation}
J(\pi_g) = -\mathcal{R}_{\mathrm{PML}}(g),
\end{equation}
hence
\begin{equation}
J^\star
= \sup_{\pi\in\Pi} J(\pi)
\ge \sup_{g\in\mathcal{G}} J(\pi_g)
= \sup_{g\in\mathcal{G}} \big(-\mathcal{R}_{\mathrm{PML}}(g)\big)
= -\inf_{g\in\mathcal{G}} \mathcal{R}_{\mathrm{PML}}(g)
= -\mathcal{R}_{\mathrm{PML}}^\star.
\end{equation}
Combining the two inequalities we get $J^\star = -\mathcal{R}_{\mathrm{PML}}^\star$.

Finally, let \(g^\star \in \arg\min_{g\in\mathcal{G}} \mathcal{R}_{\mathrm{PML}}(g)\) be Bayes-optimal. By part (b),
\begin{equation}
J(\pi_{g^\star}) = -\mathcal{R}_{\mathrm{PML}}(g^\star)
= -\mathcal{R}_{\mathrm{PML}}^\star
= J^\star,
\end{equation}
so $\pi_{g^\star}\in\arg\max_{\pi\in\Pi} J(\pi)$ is an optimal policy. Conversely, if $\pi^\star\in\arg\max_{\pi\in\Pi} J(\pi)$ is optimal, then by part (a),
\begin{equation}
\mathcal{R}_{\mathrm{PML}}(g_{\pi^\star})
= -J(\pi^\star)
= -J^\star
= \mathcal{R}_{\mathrm{PML}}^\star,
\end{equation}
Therefore, \(g_{\pi^\star}\in\arg\min_{g\in\mathcal{G}} \mathcal{R}_{\mathrm{PML}}(g)\) is Bayes-optimal. This establishes the correspondence between the optimal policy and the Bayes-optimal disambiguation rule, completing the proof of Theorem 5.2.

\section{Proof of Theorem 5.3}

In this appendix, we provide a convergence analysis of the Stage-1 policy gradient update under an idealized setting, where the encoder and discriminator are fixed and only the policy head parameters are optimized.

\subsection{Surrogate objective and parameterization}

Recall that each policy \(\pi_\theta\) generates a disambiguation rule \(g_\theta\), and its partial multi-label disambiguation risk can be represented as:
\begin{equation}
\mathcal{R}_{\mathrm{PML}}(\theta)
= \mathbb{E}_{(X,C),Z\sim\pi_\theta(\cdot\mid X,C)}\big[\ell_{\mathrm{disc}}(X,C,Z)\big].
\end{equation}
In the stage 1, we parameterize \(\pi_\theta(\cdot\mid X,C)\)using a policy head that generates logits from a fixed representation \(h=f_{\mathrm{enc}}(X,C)\), and then perform element-wise Bernoulli sampling on the candidate labels:
\begin{equation}
Z \sim \pi_\theta(\cdot\mid X,C).
\end{equation}
The exact form of \(\pi_\theta\) is not required in the proof; we only assume that it is differentiable in \(\theta\) and that the corresponding probability mass function \(\pi_\theta(Z\mid X,C)\) is strictly positive for all admissible actions \(Z\).

For convenience, we define
\begin{equation}
F(\theta) := \mathcal{R}_{\mathrm{PML}}(\theta)
= \mathbb{E}\big[\ell_{\mathrm{disc}}(X,C,Z)\big],
\end{equation}

and analyze stochastic gradient descent on \(F\).

\subsection{Policy-gradient identity and bounded variance}

We first prove that the reinforcement estimator used in the first stage is an unbiased estimator of the gradient \(\nabla_\theta F(\theta)\), and then give sufficient conditions for the bounded second moments.

\begin{lemma}[Policy-gradient identity]
Suppose that \(\ell_{\mathrm{disc}}(X,C,Z)\) is integrable, and its dependence on \(\theta\) is manifested only through \(\pi_\theta(Z\mid X,C)\). We further assume that differentiation under expectation is reasonable. Then for any \(\theta\):
\begin{equation}
\nabla_\theta F(\theta)
= \mathbb{E}\big[\ell_{\mathrm{disc}}(X,C,Z)\nabla_\theta \log \pi_\theta(Z\mid X,C)\big],
\end{equation}
where the expectation is taken over \((X,C)\sim\mathbb{P}\) and \(Z\sim\pi_\theta(\cdot\mid X,C)\).
\end{lemma}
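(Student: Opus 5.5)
We prove the identity by the standard log-derivative (likelihood-ratio) argument. Conditioning on the observation, we write
\begin{equation*}
F(\theta)=\mathbb{E}_{(X,C)\sim\mathbb{P}}\Big[\sum_{z\in\mathcal{A}(X,C)} \pi_\theta(z\mid X,C)\,\ell_{\mathrm{disc}}(X,C,z)\Big].
\end{equation*}
This is the same as the representation of the risk used in the proof of Theorem 5.2(a): the law of $(X,C)$ does not depend on $\theta$, and given $(X,C)$ the action is drawn from $\pi_\theta(\cdot\mid X,C)$. Because the action set $\mathcal{A}(X,C)\subseteq\{0,1\}^L$ is finite, the inner expectation is a finite sum. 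By hypothesis, $\theta$ enters only through the weights $\pi_\theta(z\mid X,C)$; in particular $\ell_{\mathrm{disc}}(X,C,z)$ is treated as fixed with respect to $\theta$, matching the detached representation in the idealized setting of this appendix.

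First we fix $(x,c)$ and differentiate the finite sum term by term, which needs no justification beyond differentiability of $\pi_\theta$. Since $\pi_\theta(z\mid x,c)>0$ on $\mathcal{A}(x,c)$, we can write $\nabla_\theta\pi_\theta(z\mid x,c)=\pi_\theta(z\mid x,c)\,\nabla_\theta\log\pi_\theta(z\mid x,c)$. This gives
\begin{equation*}
\nabla_\theta \sum_{z}\pi_\theta(z\mid x,c)\ell_{\mathrm{disc}}(x,c,z)=\mathbb{E}_{Z\sim\pi_\theta(\cdot\mid x,c)}\big[\ell_{\mathrm{disc}}(x,c,Z)\nabla_\theta\log\pi_\theta(Z\mid x,c)\big].
\end{equation*}

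Second, we pass $\nabla_\theta$ through the outer expectation over $(X,C)$, which the lemma explicitly assumes is legitimate. If we wanted a concrete sufficient condition, we would argue locally in $\theta$. Dominated convergence, via the mean value theorem, applies whenever $\ell_{\mathrm{disc}}(X,C,z)\,\|\nabla_\theta\pi_\theta(z\mid X,C)\|$ is bounded, uniformly over a neighborhood of $\theta$, by an integrable function of $(X,C)$. For the Bernoulli-sigmoid head this holds: $\|\nabla_\theta\pi_\theta\|$ is at most a constant times the norm of the fixed representation $\|\phi(O)\|$, so it suffices that $\ell_{\mathrm{disc}}\|\phi(O)\|$ be integrable. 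Combining the two steps, the tower property gives the stated identity with the expectation taken jointly over $(X,C)\sim\mathbb{P}$ and $Z\sim\pi_\theta(\cdot\mid X,C)$.

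The main obstacle is this interchange of gradient and outer expectation. It is exactly the point the statement delegates to an assumption, so the proof would invoke that assumption and note the dominated-convergence sufficient condition above. A secondary subtlety deserves remark. In the actual implementation, the batch reward also depends on the discriminative head, which is being trained simultaneously. The identity therefore holds only under the idealized frozen-encoder, frozen-head setting adopted in this appendix, where $\ell_{\mathrm{disc}}$ is a fixed function of $(X,C,Z)$.
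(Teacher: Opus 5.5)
Your proposal is correct and follows essentially the same route as the paper: write $F(\theta)$ as an iterated expectation with the inner expectation a finite sum over admissible actions, differentiate (using the assumed interchange of gradient and expectation), apply the log-derivative trick $\nabla_\theta\pi_\theta=\pi_\theta\nabla_\theta\log\pi_\theta$, and read the result back as a joint expectation. Your separation of the trivially differentiable inner finite sum from the outer interchange, the dominated-convergence sufficient condition for the sigmoid--Bernoulli head, and the caveat that the identity needs the idealized frozen-encoder, frozen-head setting are sound additions, though the lemma's hypotheses already grant them.
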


\begin{proof}
By definition,
\begin{equation}
F(\theta)
= \mathbb{E}_{(X,C)}\Big[\mathbb{E}_{Z\sim\pi_\theta(\cdot\mid X,C)}\big[\ell_{\mathrm{disc}}(X,C,Z)\big]\Big].
\end{equation}

By changing the order of integration and differentiation, we obtain:
\begin{equation}
\nabla_\theta F(\theta)
=\mathbb{E}_{(X,C)}\left[
\sum_{z\in \mathcal{Z}(C)}
\tilde{\ell}_{\mathrm{disc}}(X,C,z)\ \nabla_\theta \pi_\theta(z\mid X,C)
\right]
=\mathbb{E}_{(X,C),Z\sim\pi_\theta}\left[
\tilde{\ell}_{\mathrm{disc}}(X,C,Z;)\ \nabla_\theta \log \pi_\theta(Z\mid X,C)
\right].
\end{equation}

where the sum is over admissible actions \(z\). For each fixed \((X,C)\), we have
\begin{equation}
\nabla_\theta
\int \ell_{\mathrm{disc}}(X,C,z)\pi_\theta(z\mid X,C)dz
= \int \ell_{\mathrm{disc}}(X,C,z)\nabla_\theta \pi_\theta(z\mid X,C)dz.
\end{equation}
Using the log-derivative trick,
\begin{equation}
\nabla_\theta \pi_\theta(z\mid X,C)
= \pi_\theta(z\mid X,C)\nabla_\theta \log \pi_\theta(z\mid X,C),
\end{equation}
so
\begin{equation}
\nabla_\theta
\int \ell_{\mathrm{disc}}(X,C,z)\pi_\theta(z\mid X,C)dz
= \int \ell_{\mathrm{disc}}(X,C,z)\pi_\theta(z\mid X,C)
\nabla_\theta \log \pi_\theta(z\mid X,C)dz.
\end{equation}
Treating this sum as the conditional expectation of \(Z\sim\pi_\theta(\cdot\mid X,C)\), we can obtain:
\begin{equation}
\nabla_\theta F(\theta)
= \mathbb{E}\big[\ell_{\mathrm{disc}}(X,C,Z)\nabla_\theta \log \pi_\theta(Z\mid X,C)\big],
\end{equation}
as claimed.
\end{proof}

Therefore if in the t-th iteration, we draw \((X_t,C_t)\) from the PML data distribution and sample \(Z_t\sim\pi_{\theta_t}(\cdot\mid X_t,C_t)\), then the  stochastic vector:
\begin{equation}
g_t
= \ell_{\mathrm{disc}}(X_t,C_t,Z_t)\nabla_\theta \log \pi_\theta(Z_t\mid X_t,C_t)\Big|_{\theta=\theta_t}
\end{equation}
satisfies
\begin{equation}
\mathbb{E}[g_t \mid \theta_t] = \nabla_\theta F(\theta_t),
\end{equation}
i.e., $g_t$ is an unbiased estimator of the true gradient.

\begin{lemma}[Bounded second moment]
Suppose that for some \(M>0\), it is almost certain that \(\vert\ell_{\mathrm{disc}}(X,C,Z)\vert\le M\), and for all \(\theta\in\Theta\), it is almost certain that \(\Vert\nabla_\theta \log \pi_\theta(Z\mid X,C)\Vert\le L_\pi\). Then there exists a constant \(G>0\) such that:
\begin{equation}
\mathbb{E}\big[\Vert g_t\Vert^2 \mid \theta_t\big] \le G^2
\quad\text{for all } t.
\end{equation}
\end{lemma}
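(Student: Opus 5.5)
The bound follows directly from the definition of \(g_t\) as a product of a scalar loss and a score vector. I will apply the two almost-sure bounds pointwise and then take conditional expectations. Fix \(t\) and condition on \(\theta_t\). Since \(\theta_t\in\Theta\) (the iterates are assumed to stay in the closed convex set \(\Theta\) of Theorem 5.3, e.g.\ via projection), both hypotheses apply at \(\theta=\theta_t\) for the freshly drawn triple \((X_t,C_t,Z_t)\). The fresh triple has \((X_t,C_t)\sim\mathbb{P}\) and \(Z_t\sim\pi_{\theta_t}(\cdot\mid X_t,C_t)\), independent of the past given \(\theta_t\).

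The key step is the pointwise estimate
\begin{equation*}
\|g_t\| = |\ell_{\mathrm{disc}}(X_t,C_t,Z_t)|\,\big\|\nabla_\theta \log \pi_\theta(Z_t\mid X_t,C_t)\big|_{\theta=\theta_t}\big\| \le M L_\pi,
\end{equation*}
which holds almost surely by homogeneity of the norm under scalar multiplication. Squaring gives \(\|g_t\|^2\le M^2L_\pi^2\) almost surely.

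Taking conditional expectation given \(\theta_t\), monotonicity yields \(\mathbb{E}[\|g_t\|^2\mid\theta_t]\le M^2L_\pi^2\). Setting \(G:=ML_\pi\) (or \(\max(ML_\pi,1)\) if strict positivity is desired) gives a constant independent of \(t\), which proves the claim.

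The only delicate point is that the bounds must hold almost surely under the policy-induced law of \(Z_t\) at the random parameter \(\theta_t\), not merely for a fixed \(\theta\). I handle this by noting that the hypothesis is stated uniformly over \(\theta\in\Theta\). Conditioning on \(\theta_t\) then reduces to the fixed-\(\theta\) case, and a regular conditional distribution exists because the action space \(\{0,1\}^L\) is finite. If one wants a concrete sufficient condition for \(L_\pi\), the Bernoulli-logit parameterization gives \(\nabla_{\theta_j}\log\pi_\theta=(Z_j-p_{\theta,j})\phi(O)\), so \(L_\pi\le\sqrt{L}\sup\|\phi(O)\|\) whenever the encoder output is bounded. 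I would mention this only as a remark.
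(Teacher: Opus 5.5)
Your proof is correct and follows the paper's argument: you bound $\Vert g_t\Vert\le ML_\pi$ pointwise from the product structure of $g_t$, square, take the conditional expectation given $\theta_t$, and set $G=ML_\pi$. Your extra remarks address points the paper leaves implicit without changing the route: that the iterates must stay in $\Theta$ for the hypothesis to apply (the paper's update has no projection), and the $\max(ML_\pi,1)$ choice to keep $G>0$.
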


\begin{proof}
According to the definition of \(g_t\):
\begin{equation}
\Vert g_t\Vert
= \big\vert\ell_{\mathrm{disc}}(X_t,C_t,Z_t)\big\vert
\Big\Vert\nabla_\theta \log \pi_\theta(Z_t\mid X_t,C_t)\Big\vert_{\theta=\theta_t}\Big\Vert
\le M L_\pi.
\end{equation}
Hence \(\Vert g_t\Vert^2 \le M^2 L_\pi^2\) is almost certainly true, and by taking the conditional expectation, we can obtain:
\begin{equation}
\mathbb{E}\big[\Vert g_t\Vert^2 \mid \theta_t\big] \le M^2 L_\pi^2.
\end{equation}
The result follows by setting $G = M L_\pi$.
\end{proof}

This establishes the unbiased and bounded variance conditions used in the convergence analysis of Stage 1 .

\subsection{Stochastic gradient descent convergence}

Now we will use the standard proof of the stochastic gradient method on smooth targets to prove Theorem 5.3.

\begin{proof}[Proof of Theorem 5.3]
By assumption (a), \(F(\theta)\) is bounded below and has Lipschitz-continuous gradient on \(\Theta\), i.e., there exists \(L>0\) such that
\begin{equation}
\big\Vert\nabla_\theta F(\theta) - \nabla_\theta F(\theta')\big\Vert
\le L \Vert\theta - \theta'\Vert
\quad\text{for all } \theta,\theta'\in\Theta.
\end{equation}
Let \(\theta_{t+1} = \theta_t - \alpha_t g_t\) be the stochastic gradient descent update function, where \(\alpha_t>0\) and \(g_t\) is as shown above. Using the standard descent lemma for smooth functions, for any \(\theta\) and any \(\Delta\theta\), we have:
\begin{equation}
F(\theta + \Delta\theta)
\le F(\theta) + \nabla_\theta F(\theta)^{\top} \Delta\theta + \frac{L}{2}\Vert\Delta\theta\Vert^2.
\end{equation}
Applying this with \(\theta=\theta_t\) and \(\Delta\theta = -\alpha_t g_t\) yields
\begin{equation}
F(\theta_{t+1})
\le F(\theta_t) - \alpha_t \nabla_\theta F(\theta_t)^{\top} g_t+
\frac{L}{2}\alpha_t^2 \Vert g_t\Vert^2.
\end{equation}

Taking the conditional expectation of the \(\sigma\) algebra generated from the history up to time \(t\), and using the preceding lemma C.1 and C.2, we obtain:
\begin{equation}
\mathbb{E}\big[F(\theta_{t+1}) \mid \theta_t\big]
\le F(\theta_t) - \alpha_t \big\Vert\nabla_\theta F(\theta_t)\big\Vert^2+
\frac{L}{2}\alpha_t^2 G^2.
\end{equation}
Taking full expectations and rearranging gives
\begin{equation}
\mathbb{E}\big[F(\theta_{t+1})\big]
\le \mathbb{E}\big[F(\theta_t)\big]-
\alpha_t\mathbb{E}\big[\Vert\nabla_\theta F(\theta_t)\Vert^2\big]+
\frac{L}{2}\alpha_t^2 G^2.
\end{equation}

Summing the inequality from \(t=0\) to \(T-1\) and combining the left-hand sides, we get:
\begin{equation}
\mathbb{E}\big[F(\theta_T)\big]
\le \mathbb{E}\big[F(\theta_0)\big]
-
\sum_{t=0}^{T-1} \alpha_t\mathbb{E}\big[\Vert\nabla_\theta F(\theta_t)\Vert^2\big]
+
\frac{L}{2}G^2 \sum_{t=0}^{T-1} \alpha_t^2.
\end{equation}
Since \(F\) is bounded below, the left-hand side is uniformly bounded below \(T\). Using assumption (c), i.e., \(\sum_{t=0}^{\infty} \alpha_t^2<\infty\), we can let \(T\to\infty\) and derive:
\begin{equation}
\sum_{t=0}^{\infty} \alpha_t\mathbb{E}\big[\Vert\nabla_\theta F(\theta_t)\Vert^2\big]
< \infty.
\end{equation}
Combined with \(\sum_{t=0}^{\infty} \alpha_t = \infty\), this implies
\begin{equation}
\liminf_{t\to\infty} \mathbb{E}\big[\Vert\nabla_\theta F(\theta_t)\Vert^2\big] = 0.
\end{equation}
Which is a standard argument that \(F(\theta_t)\) is almost necessarily convergent and
\begin{equation}
\lim_{t\to\infty} \Vert\nabla_\theta F(\theta_t)\Vert = 0
\quad\text{almost surely}.
\end{equation}
Therefore every limit point of \({\theta_t}\) is a first-order stationary point of $F$. Then the proof of Theorem 5.3 can be completed by recalling the definition of $F(\theta)$ in the main text.
\end{proof}
\section{Proof of Theorem 5.5}
In this appendix we formalize the excess risk decomposition stated in Theorem 5.5. Our method is a two-stage procedure where Stage 1 learns a disambiguation rule $g$ and Stage 2 trains a downstream
predictor (and/or feature-selection module) using pseudo labels produced by $g$.
While this end-to-end reuse of training data is standard in practice, it introduces statistical dependence that
complicates a clean generalization analysis.
To keep the proof concise and to enable standard uniform convergence arguments, we adopt a conventional
sample-splitting (or cross-fitting) protocol in the analysis below.

\begin{assumption}[Stage Independence Based on Sample Splitting] For theoretical analysis, we split the training data into two independent subsets $\mathcal{D}_1$ and $\mathcal{D}_2$. Stage 1 uses only $\mathcal{D}_1$ to learn the disambiguation rule $g$. Stage 2 uses only $\mathcal{D}_2$ and the pseudo-labels generated by $g$ to train the classifier/feature selection process. Therefore, given $g$, the samples in $\mathcal{D}_2$ remain independent and identically distributed.

\end{assumption}

\subsection{Preliminaries}
In this appendix, we conditionally use the first-stage rule $g$ learned from $\mathcal{D}_1$. According to the assumption D.1, the second-stage samples $\mathcal{D}_2=\{(X_i,Y_i,C_i)\}_{i=1}^n$ are i.i.d (conditional on $g$). We revisit the population risk and empirical risk defined in the main text. For the predictor variable $f\in\mathcal{F}$, the true classification risk under $\ell_{\mathrm{cls}}$ is:
\begin{equation}
\mathcal{R}_{\mathrm{cls}}(f)
= \mathbb{E}\big[\ell_{\mathrm{cls}}(f(X),Y)\big].
\end{equation}
Given a disambiguation rule \(g\) and pseudo-labels \(Z=g(X,C)\), the corresponding pseudo-label risk is:
\begin{equation}
\mathcal{R}_{\mathrm{cls}}^{(g)}(f)
= \mathbb{E}\big[\ell_{\mathrm{cls}}(f(X),Z)\big].
\end{equation}
For a finite sample \({(X_i,Y_i,C_i)}{i=1}^n\) and \(Z_i = g(X_i,C_i)\), the empirical pseudo-label risk is:
\begin{equation}
\widehat{\mathcal{R}}_{\mathrm{cls}}^{(g)}(f)
= \frac{1}{n}\sum_{i=1}^n \ell_{\mathrm{cls}}(f(X_i),Z_i).
\end{equation}

We measure the average difference between real and fake labels using the following method:
\begin{equation}
\varepsilon_{\mathrm{pseudo}}(g)
= \mathbb{E}\big[d_{\mathrm{H}}(Y,g(X,C))\big],
\end{equation}
Where \(d_{\mathrm{H}}(y,z)\) represents the Hamming distance over \(\{0,1\}^L\) scaled to the interval \([0,1]\). Assumption 5.4 states that \(\ell_{\mathrm{cls}}\) is bounded over \([0,1]\) and that its label parameters satisfy the Lipschitz condition with respect to \(d_{\mathrm{H}}\).

\subsection{Deviation between true risk and pseudo-label risk}

We first show that, for any fixed predictor \(f\in\mathcal{F}\), the difference between its true risk and its pseudo-label risk is controlled by the average pseudo-label noise level.

\begin{lemma}[Label-noise deviation bound]
Under Assumption 5.4, for any \(f\in\mathcal{F}\),
\begin{equation}
\big\vert\mathcal{R}_{\mathrm{cls}}(f) - \mathcal{R}_{\mathrm{cls}}^{(g)}(f)\big\vert
\le L_{\mathrm{cls}}\varepsilon_{\mathrm{pseudo}}(g).
\end{equation}
\end{lemma}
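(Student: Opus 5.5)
The approach is to compare the two risks pointwise inside a single expectation and then apply the Lipschitz condition of Assumption~5.4. Both $\mathcal{R}_{\mathrm{cls}}(f)$ and $\mathcal{R}_{\mathrm{cls}}^{(g)}(f)$ are expectations over the same joint law of $(X,Y,C)$, with $Z=g(X,C)$; if $g$ is randomized, its auxiliary randomness is included in the expectation. I would therefore write
\begin{equation*}
\mathcal{R}_{\mathrm{cls}}(f) - \mathcal{R}_{\mathrm{cls}}^{(g)}(f)
= \mathbb{E}\big[\ell_{\mathrm{cls}}(f(X),Y) - \ell_{\mathrm{cls}}(f(X),Z)\big].
\end{equation*}
Linearity of expectation is justified here because Assumption~5.4 says $\ell_{\mathrm{cls}}$ is bounded by $M$, so both terms are integrable.

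Next I would use Jensen's inequality, $|\mathbb{E}[W]|\le\mathbb{E}|W|$, to move the absolute value inside the expectation. Then I would apply the Lipschitz bound pointwise, with $\hat{y}=f(X)\in[0,1]^L$ and label vectors $y=Y$, $y'=Z$, both in $\{0,1\}^L$ because $g$ outputs hard labels. This gives
\begin{equation*}
\big|\ell_{\mathrm{cls}}(f(X),Y)-\ell_{\mathrm{cls}}(f(X),Z)\big| \le L_{\mathrm{cls}}\, d_{\mathrm{H}}(Y,Z)
\end{equation*}
almost surely. Taking expectations and using the definition $\varepsilon_{\mathrm{pseudo}}(g)=\mathbb{E}[d_{\mathrm{H}}(Y,g(X,C))]$ yields the claim.

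There is no real obstacle here. The only point that needs care is checking that the hypotheses of Assumption~5.4 actually apply:
\begin{itemize}
\item The predictions $f(X)$ must lie in $[0,1]^L$. This holds because $\mathcal{F}$ consists of maps into $[0,1]^L$, as in Theorem~5.5.
\item The expectation defining $\varepsilon_{\mathrm{pseudo}}(g)$ must be taken over the same joint distribution of $(X,Y,C)$ and the randomness of $g$ as the two risks. In the setting of Assumption~D.1 this means conditioning on the Stage~1 rule $g$ throughout.
\end{itemize}
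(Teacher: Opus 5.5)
Your proposal is correct and follows the paper's proof essentially step for step: write the risk difference as a single expectation, move the absolute value inside, apply the label-Lipschitz condition of Assumption~5.4 pointwise with $\hat y = f(X)$, $y=Y$, $y'=Z$, and identify $\mathbb{E}[d_{\mathrm{H}}(Y,Z)]$ with $\varepsilon_{\mathrm{pseudo}}(g)$. Your extra checks are sensible additions that the paper leaves implicit: integrability from boundedness, $f(X)\in[0,1]^L$, and taking all three expectations over the same joint law, including any randomness of $g$ and conditioning on the Stage~1 rule.
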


\begin{proof}
The right half of Formula (89) can be written as:
\begin{equation}
\mathcal{R}_{\mathrm{cls}}(f) - \mathcal{R}_{\mathrm{cls}}^{(g)}(f)
= \mathbb{E}\big[\ell_{\mathrm{cls}}(f(X),Y) - \ell_{\mathrm{cls}}(f(X),Z)\big].
\end{equation}
Taking its absolute value and applying the Lipschitz property of \(\ell_{\mathrm{cls}}\), we can get:
\begin{equation}
\big\vert\mathcal{R}_{\mathrm{cls}}(f) - \mathcal{R}_{\mathrm{cls}}^{(g)}(f)\big\vert
\le \mathbb{E}\big[\big\vert\ell_{\mathrm{cls}}(f(X),Y) - \ell_{\mathrm{cls}}(f(X),Z)\big\vert\big]
\le L_{\mathrm{cls}}\mathbb{E}\big[d_{\mathrm{H}}(Y,Z)\big]
= L_{\mathrm{cls}}\varepsilon_{\mathrm{pseudo}}(g),
\end{equation}
which proves the claim.
\end{proof}

Specifically, this shows that the mappings \(f\mapsto \mathcal{R}{\mathrm{cls}}(f)\) and \(f\mapsto \mathcal{R}{\mathrm{cls}}^{(g)}(f)\) are consistently close on \(\mathcal{F}\), with the deviation depending only on the pseudo-label error of \(g\).

\subsection{Uniform convergence for empirical pseudo-label risk}

We next recall a standard uniform convergence bound that controls the deviation between the overall pseudo-label risk and its empirical corresponding value. Fix $g$ (conditional on Stage 1), let:
\begin{equation}
\mathcal{G}:=\left\{(x, z) \mapsto \ell_{\mathrm{cls}}(f(x), z): f \in \mathcal{F}\right\},
\end{equation}
which represents the class of loss functions induced by \(\mathcal{F}\) on the pair \((X,Z)\).Let $\mathfrak{R}_n(\mathcal{G})$ denote the empirical Rademacher complexity of $\mathcal{G}$ on the second-stage samples $\{(X_i,Z_i)\}_{i=1}^n$. Since $\ell_{\mathrm{cls}}$ is constrained by $M$ (assumption 5.4), a standard Rademacher-based bound implies that for any $\delta\in(0,1)$, with probability at least $1-\delta$ over the draw of $\mathcal{D}_2$ (conditional on $g$), we have:
\begin{equation}
\sup_{f\in\mathcal{F}}
\big\vert\mathcal{R}_{\mathrm{cls}}^{(g)}(f) - \widehat{\mathcal{R}}_{\mathrm{cls}}^{(g)}(f)\big\vert
\le 2\mathfrak{R}_n(\mathcal{G})+3M\sqrt{\frac{\log(2/\delta)}{2n}}.
\end{equation}
For compactness, we define:
\begin{equation}
\mathrm{Gen}_n(\mathcal{F}\delta)
:= 2\mathfrak{R}_n(\mathcal{G})
+
3M\sqrt{\frac{\log(2/\delta)}{2n}},
\end{equation}
so that, with probability at least \(1-\delta\),
\begin{equation}
\sup_{f\in\mathcal{F}}
\big\vert\mathcal{R}_{\mathrm{cls}}^{(g)}(f) - \widehat{\mathcal{R}}_{\mathrm{cls}}^{(g)}(f)\big\vert
\le \mathrm{Gen}_n(\mathcal{F},\delta).
\end{equation}

\subsection{Proof of Theorem 5.5}

We now combine the previous bounds to prove the excess risk decomposition.

\begin{proof}[Proof of Theorem 5.5]
Let \(\delta\in(0,1)\) be fixed. On the events where the above uniform deviation bound holds, for all \(f\in\mathcal{F}\), we have:
\begin{equation}
\mathcal{R}_{\mathrm{cls}}^{(g)}(f)
\le \widehat{\mathcal{R}}_{\mathrm{cls}}^{(g)}(f)+
\mathrm{Gen}_n(\mathcal{F},\delta),
\end{equation}
and
\begin{equation}
\widehat{\mathcal{R}}_{\mathrm{cls}}^{(g)}(f)
\le \mathcal{R}_{\mathrm{cls}}^{(g)}(f)+
\mathrm{Gen}_n(\mathcal{F},\delta).
\end{equation}

Let \(\hat{f}\in\arg\min_{f\in\mathcal{F}}\widehat{\mathcal{R}}{\mathrm{cls}}^{(g)}(f)\) be the empirical pseudo-label risk minimizer, and let \(f^\star\in\arg\min{f\in\mathcal{F}}\mathcal{R}{\mathrm{cls}}(f)\) be the minimizer of the true risk on \(\mathcal{F}\). We first compare the pseudo-label risks:
\begin{equation}
\mathcal{R}_{\mathrm{cls}}^{(g)}(\hat{f})
\le \widehat{\mathcal{R}}_{\mathrm{cls}}^{(g)}(\hat{f})+\mathrm{Gen}_n(\mathcal{F},\delta)
\le \widehat{\mathcal{R}}_{\mathrm{cls}}^{(g)}(f^\star)+\mathrm{Gen}_n(\mathcal{F},\delta)
\le \mathcal{R}_{\mathrm{cls}}^{(g)}(f^\star)+2\mathrm{Gen}_n(\mathcal{F},\delta).
\end{equation}

Next, using the label-noise deviation bound, we relate true risks and pseudo-label risks:
\begin{equation}
\mathcal{R}_{\mathrm{cls}}(\hat{f})
\le \mathcal{R}_{\mathrm{cls}}^{(g)}(\hat{f})+L_{\mathrm{cls}}\varepsilon_{\mathrm{pseudo}}(g),
\end{equation}
and
\begin{equation}
\mathcal{R}_{\mathrm{cls}}^{(g)}(f^\star)
\le \mathcal{R}_{\mathrm{cls}}(f^\star)+L_{\mathrm{cls}}\varepsilon_{\mathrm{pseudo}}(g).
\end{equation}
Combining these two inequalities, we obtain:
\begin{equation}
\begin{aligned}
\mathcal{R}_{\mathrm{cls}}(\hat{f})
&\le \mathcal{R}_{\mathrm{cls}}^{(g)}(\hat{f})
+ L_{\mathrm{cls}}\varepsilon_{\mathrm{pseudo}}(g) \\
&\le \mathcal{R}_{\mathrm{cls}}^{(g)}(f^\star)
+ 2\mathrm{Gen}_n(\mathcal{F},\delta)
+ L_{\mathrm{cls}}\varepsilon_{\mathrm{pseudo}}(g) \\
&\le \mathcal{R}_{\mathrm{cls}}(f^\star)
+ 2 L_{\mathrm{cls}}\varepsilon_{\mathrm{pseudo}}(g)
+ 2\mathrm{Gen}_n(\mathcal{F},\delta).
\end{aligned}
\end{equation}

By rearranging the terms, we obtain:
\begin{equation}
\mathcal{R}_{\mathrm{cls}}(\hat{f}) - \mathcal{R}_{\mathrm{cls}}(f^\star)
\le 2L_{\mathrm{cls}}\varepsilon_{\mathrm{pseudo}}(g)
+2\mathrm{Gen}_n(\mathcal{F},\delta),
\end{equation}
which holds with a probability of at least \(1 - \delta\) over the draw of $\mathcal{D}_2$. This is precisely the bound described in Theorem 5.5.
\end{proof}

\section{Details of nine benchmark datasets.}
\begin{table}[H]
\footnotesize 
\label{tab:my-table1}
\centering
\caption{Details of nine benchmark datasets.}
\begin{tabular}{cp{1.1cm}p{1.2cm}p{1.2cm}p{1.0cm}}
\hline
\textbf{Datasets} & \textbf{Domain} & \#\textbf{Instances} & \#\textbf{Features} & \#\textbf{Labels} \\
\hline
Bibtex & Text & 1953 & 1836 & 15 \\
Birds & Audio & 645 & 260 & 19 \\
CHD\_49 & Medicine & 555 & 49 & 6 \\
Chess & Image	& 258 & 585 & 15 \\
HumanPseAAC & Biology & 3106 & 40 & 14 \\
Mediamill & Image & 8276 & 120 & 15 \\
PlantPseAAC & Biology & 978 & 440 & 12 \\
Slashdot & Text & 2701 & 1079 & 15 \\
Yeast & Biology & 2417 & 103 & 14 \\

\hline
\end{tabular}
\end{table}
\section{Experimental results  on  Hamming Loss (HL) and Coverage Error (CE)}
\begin{table*}[htbp]
    \centering
  
    \label{tab:full_results}
    
    \scriptsize
    \setlength{\tabcolsep}{1.8pt}
    \renewcommand{\arraystretch}{0.82}
    \caption{Experimental results (mean ± std.) on 9 datasets showing Hamming Loss (HL $\downarrow$) and Coverage Error (CE $\downarrow$).}
    \begin{tabular}{@{}l*{10}{c}@{}}
        \toprule
        \textbf{Datasets} & \textbf{\textsc{POMDP-FS}} & \textbf{\textsc{PML-FSLA}} & \textbf{\textsc{PML-FSMIR}} & \textbf{\textsc{PML-FSSO}} & \textbf{\textsc{fPML}} & \textbf{\textsc{PML-LD}} & \textbf{\textsc{PAMB}} & \textbf{\textsc{PML-VLS}} & \textbf{\textsc{PML-MAP}} \\ 
        \midrule
        \multicolumn{10}{c}{\textbf{HL} ($\downarrow$)} \\ 
        \midrule
       Bibtex      & 0.06$\pm$0.01 & 0.90$\pm$0.00 & 0.06$\pm$0.00 & 0.19$\pm$0.00 & \textbf{0.04$\pm$0.01} & 0.06$\pm$0.01 & 0.07$\pm$0.01 & 0.08$\pm$0.01 & 0.11$\pm$0.02 \\
        Birds       & \textbf{0.06$\pm$0.00} & 0.15$\pm$0.01 & 0.32$\pm$0.09 & 0.12$\pm$0.00 & 0.13$\pm$0.00 & 0.13$\pm$0.01 & 0.20$\pm$0.02 & 0.11$\pm$0.00 & 0.10$\pm$0.01 \\
        CHD\_49     & \textbf{0.31$\pm$0.00} & 0.42$\pm$0.00 & 0.40$\pm$0.02 & 0.42$\pm$0.02 & 0.42$\pm$0.00 & 0.44$\pm$0.01 & 0.39$\pm$0.01 & 0.39$\pm$0.03 & 0.38$\pm$0.03 \\
        Chess       & \textbf{0.04$\pm$0.00} & 0.07$\pm$0.00 & 0.06$\pm$0.00 & 0.06$\pm$0.00 & 0.07$\pm$0.00 & 0.10$\pm$0.02 & 0.10$\pm$0.02 & 0.05$\pm$0.01 & 0.05$\pm$0.01 \\
        HumanPseAAC & 0.09$\pm$0.00 & 0.58$\pm$0.02 & 0.11$\pm$0.01 & 0.08$\pm$0.00 & \textbf{0.08$\pm$0.00} & 0.14$\pm$0.01 & 0.11$\pm$0 & 0.09$\pm$0.00 & 0.10$\pm$0.01 \\
        Mediamill   & \textbf{0.05$\pm$0.00} & 0.29$\pm$0.12 & 0.07$\pm$0.00 &  0.07$\pm$0.00 & 0.07$\pm$0.00 & 0.07$\pm$0.00 & 0.07$\pm$0.00 & 0.08$\pm$0.01 & 0.11$\pm$0.02 \\
        PlantPseAAC & \textbf{0.09$\pm$0.00} & 0.89$\pm$0.05 & 0.09$\pm$0.00 & 0.09$\pm$0.00 & 0.09$\pm$0.00 & 0.19$\pm$0.01 & 0.17$\pm$0.01 & 0.10$\pm$0.01 & 0.11$\pm$0.01 \\
        Slashdot    & \textbf{0.02$\pm$0.00} & 0.07$\pm$0.00 & 0.03$\pm$0.01 & 0.03$\pm$0.01 & 0.07$\pm$0.00 & 0.06$\pm$0.01 & 0.12$\pm$0.01 & 0.06$\pm$0.01 & 0.06$\pm$0.01 \\
        Yeast       & \textbf{0.22$\pm$0.01} & 0.56$\pm$0.04 & 0.39$\pm$0.09 & 0.30$\pm$0.00 & 0.30$\pm$0.00 & 0.31$\pm$0.00 & 0.32$\pm$0.00 & 0.3$\pm$0.00 & 0.30$\pm$0.00 \\
       
        \midrule
        \multicolumn{10}{c}{\textbf{CE} ($\downarrow$)} \\ 
        \midrule
   Bibtex      & 0.23$\pm$0.05 & 0.29$\pm$0.00 & 0.14$\pm$0.03 & \textbf{0.09$\pm$0.04} & 0.25$\pm$0.01 & 0.16$\pm$0.05 & 0.18$\pm$0.07 & 0.24$\pm$0.04 & 0.31$\pm$0.01 \\
        Birds       & \textbf{0.22$\pm$0.01} & 0.57$\pm$0.03 & 0.47$\pm$0.05 & 0.57$\pm$0.03 & 0.87$\pm$0.00 & 0.45$\pm$0.01 & 0.68$\pm$0.08 & 0.57$\pm$0.09 & 0.62$\pm$0.04 \\
        CHD\_49     & 0.65$\pm$0.00 & 0.52$\pm$0.01 & 0.50$\pm$0.04 & \textbf{0.49$\pm$0.07} & 0.71$\pm$0.03 & 0.58$\pm$0.03 & 0.54$\pm$0.04 & 0.66$\pm$0.02 & 0.66$\pm$0.02 \\
        Chess       & 0.2$\pm$0.00 & 0.14$\pm$0.02 & \textbf{0.13$\pm$0.03} & 0.15$\pm$0.08 & 0.45$\pm$0.02 & 0.27$\pm$0.03 & 0.25$\pm$0.12 & 0.26$\pm$0.06 & 0.23$\pm$0.07 \\
        HumanPseAAC & \textbf{0.25$\pm$0.01} & 0.25$\pm$0.00 & 0.25$\pm$0.00 & 0.34$\pm$0.07 & 0.47$\pm$0.01 & 0.47$\pm$0.02 & 0.32$\pm$0.04 & 0.29$\pm$0.10 & 0.37$\pm$0.05 \\
        Mediamill   & \textbf{0.14$\pm$0.01} & \textbf 0.14$\pm$0.00 & \textbf 0.14$\pm$0.00 & 0.18$\pm$0.08 & 0.15$\pm$0.00 & 0.24$\pm$0.04 & 0.23$\pm$0.11 & 0.19$\pm$0.00 & 0.18$\pm$0.01 \\
        PlantPseAAC & 0.29$\pm$0.02 & 0.30$\pm$0.00 & 0.28$\pm$0.01 & \textbf{0.27$\pm$0.00} & 0.40$\pm$0.06 & 0.34$\pm$0.02 & 0.49$\pm$0.02 & 0.36$\pm$0.09 & 0.45$\pm$0.05 \\
        Slashdot    & 0.12$\pm$0.00 & 0.06$\pm$0.02 & \textbf{0.06$\pm$0.01} & 0.07$\pm$0.02 & 0.37$\pm$0.00 & 0.19$\pm$0.09 & 0.34$\pm$0.08 & 0.19$\pm$0.09 & 0.19$\pm$0.08 \\
        Yeast       & 0.54$\pm$0.01 & 0.52$\pm$0.00 & \textbf{0.50$\pm$0.03} & 0.58$\pm$0.05 & 0.53$\pm$0.04 & 0.54$\pm$0.01 & 0.59$\pm$0.04 & 0.56$\pm$0.05 & 0.78$\pm$0.00 \\
        \bottomrule
    \end{tabular}%
\end{table*}
\section{Implementation details.}
We use a Transformer encoder with $d_{\mathrm{model}}=128$, $n_{\mathrm{head}}=4$, and $n_{\mathrm{layer}}=2$.
The feed-forward dimension is $4d_{\mathrm{model}}$.
In Stage 1, the discriminator (encoder + label head) is optimized by Adam with learning rate $10^{-3}$,
while the policy head uses Adam with learning rate $10^{-4}$; the reward is defined by Stage 1.
The graph regularizer uses mini-batch $k$NN with $k=5$ , and we set
$\lambda_{\mathrm{struct}}=0.01$ unless otherwise specified.
In Stage 2, the horizon is $T=\min(k,d)$ and the feature-acquisition policy uses a softmax temperature
$\tau_{\mathrm{act}}=1.0$ and a moving-average baseline with momentum $\beta=0.9$.
\section{Budget Curves}
\begin{figure}[htbp]
    \centering
    \begin{subfigure}{0.1\textwidth}
        \centering
        \includegraphics[width=\textwidth]{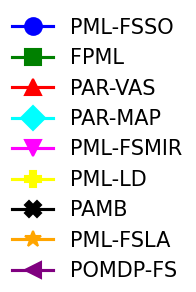}
    
    \end{subfigure}
    \begin{subfigure}{0.18\textwidth}
        \centering
        \includegraphics[width=\textwidth]{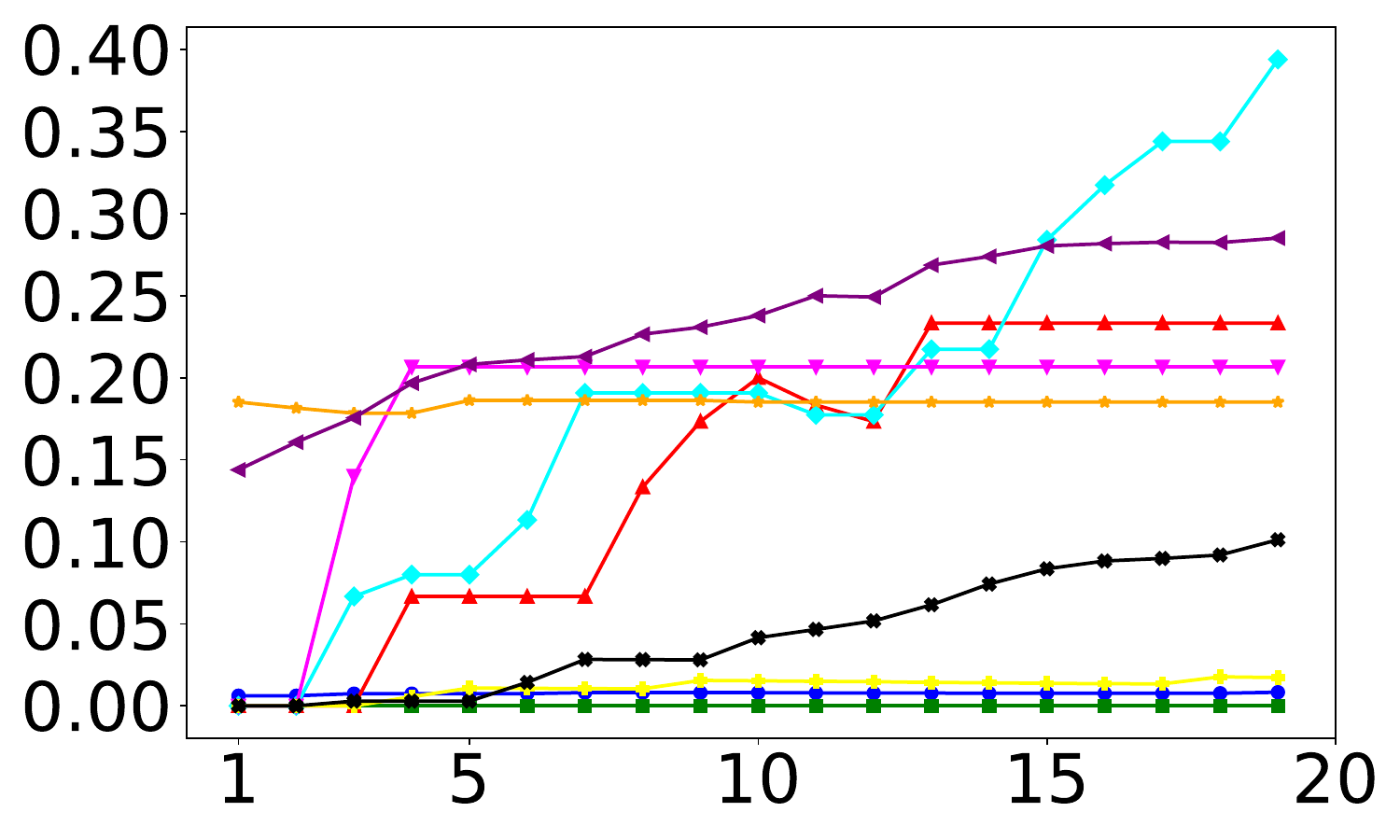}
        \caption{Micro-F1}
    \end{subfigure}
    \begin{subfigure}{0.18\textwidth}
        \centering
        \includegraphics[width=\textwidth]{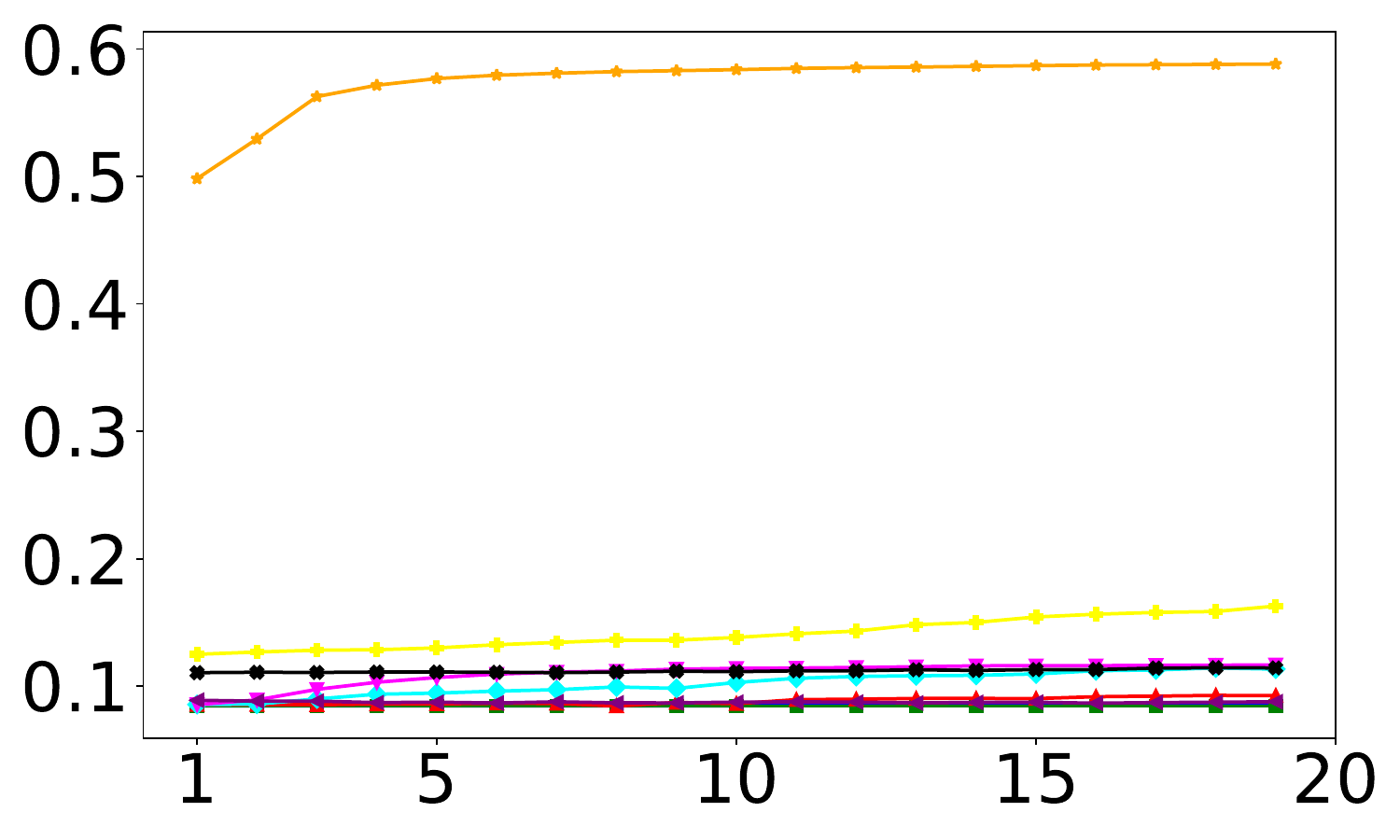}
        \caption{Hamming Loss}
    \end{subfigure}
    \begin{subfigure}{0.18\textwidth}
        \centering
        \includegraphics[width=\textwidth]{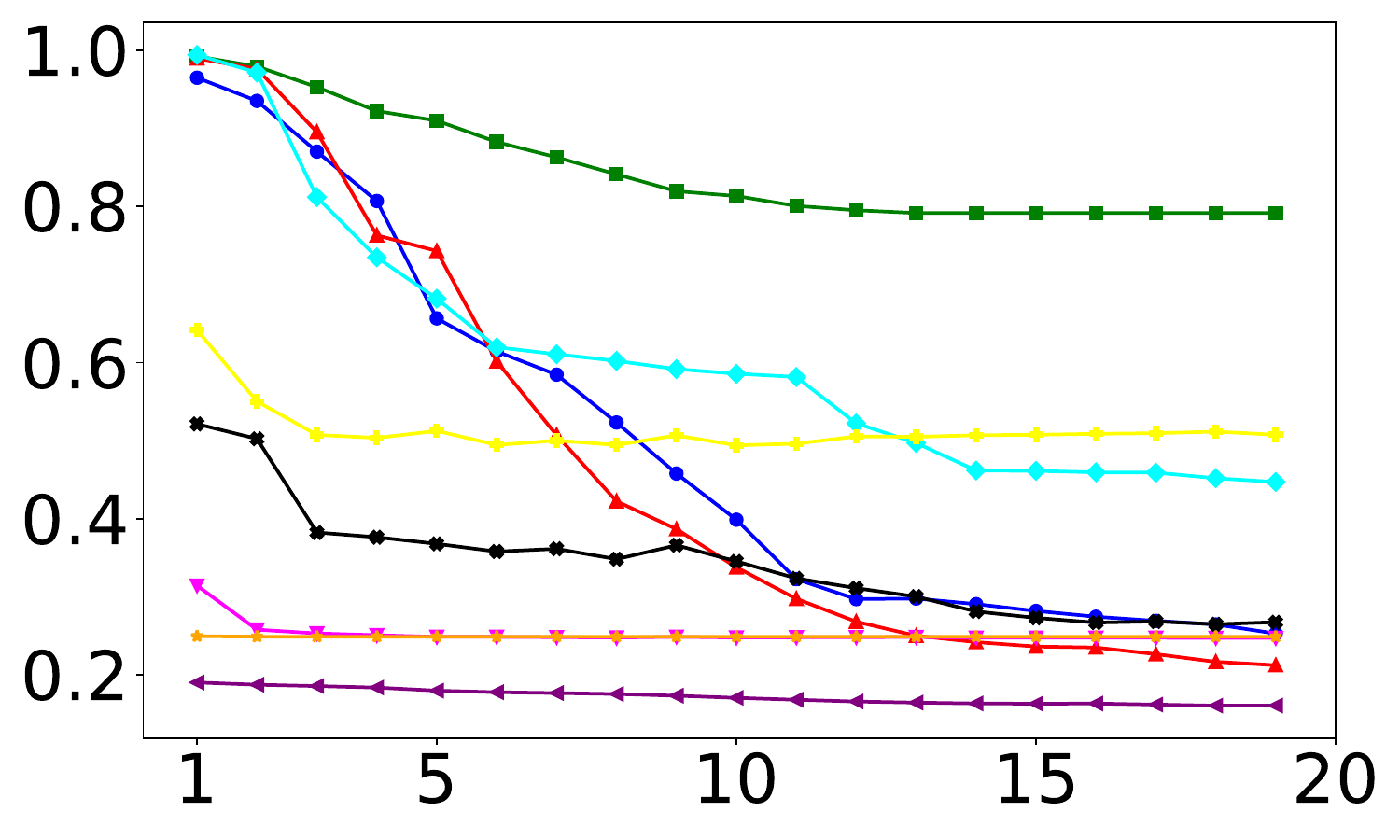}
        \caption{Ranking Loss}
    \end{subfigure}
    \begin{subfigure}{0.18\textwidth}
        \centering
        \includegraphics[width=\textwidth]{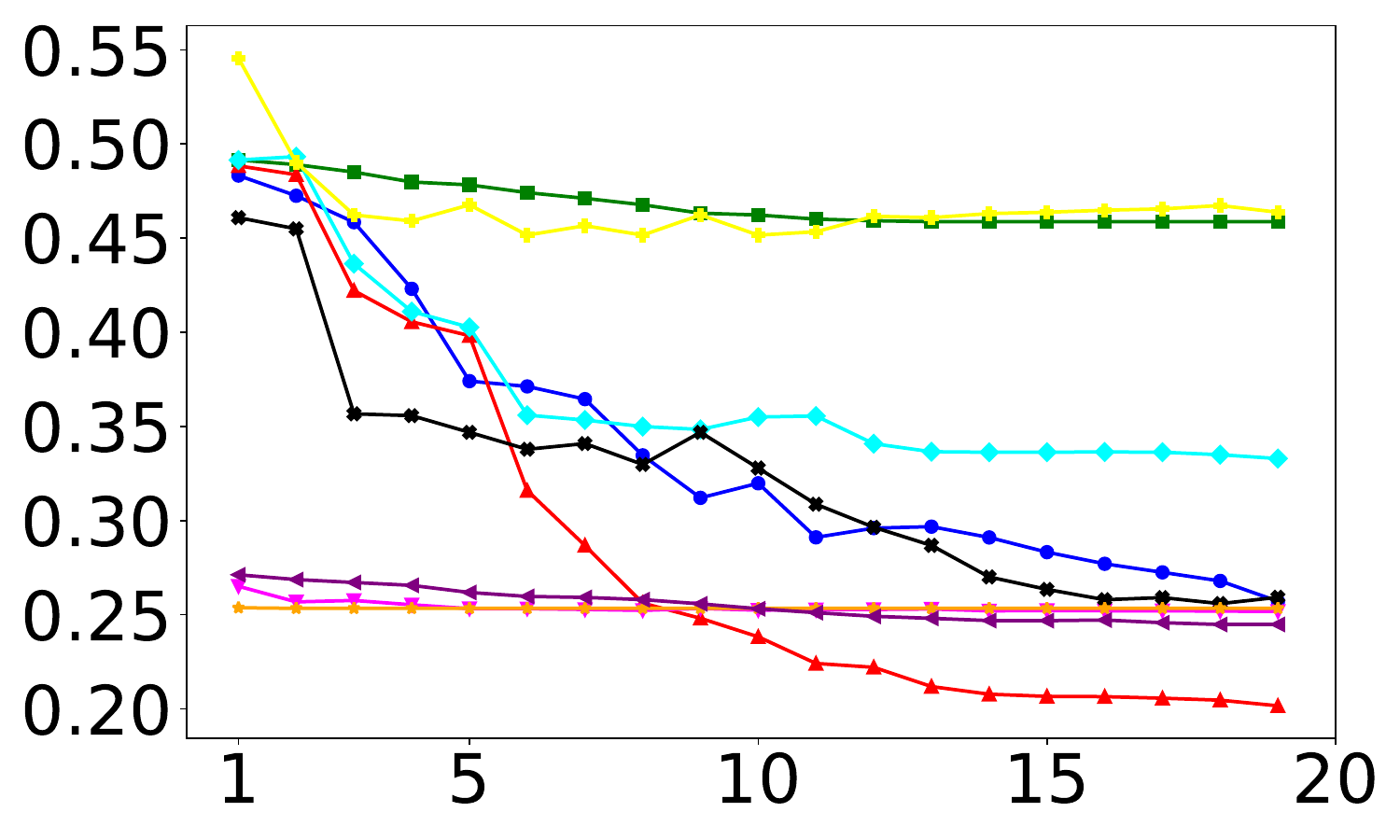}
        \caption{Coverage Error}
    \end{subfigure}
    \caption{ Nine methods on HumanPseAAC in terms of Micro-F1, Hamming Loss, Ranking Loss and Coverage Error.}
    
    \vspace{-0.7mm}
\end{figure}
\begin{figure}[htbp]
    \centering
    \begin{subfigure}{0.1\textwidth}
        \centering
        \includegraphics[width=\textwidth]{legend1.png}
    
    \end{subfigure}
    \begin{subfigure}{0.18\textwidth}
        \centering
        \includegraphics[width=\textwidth]{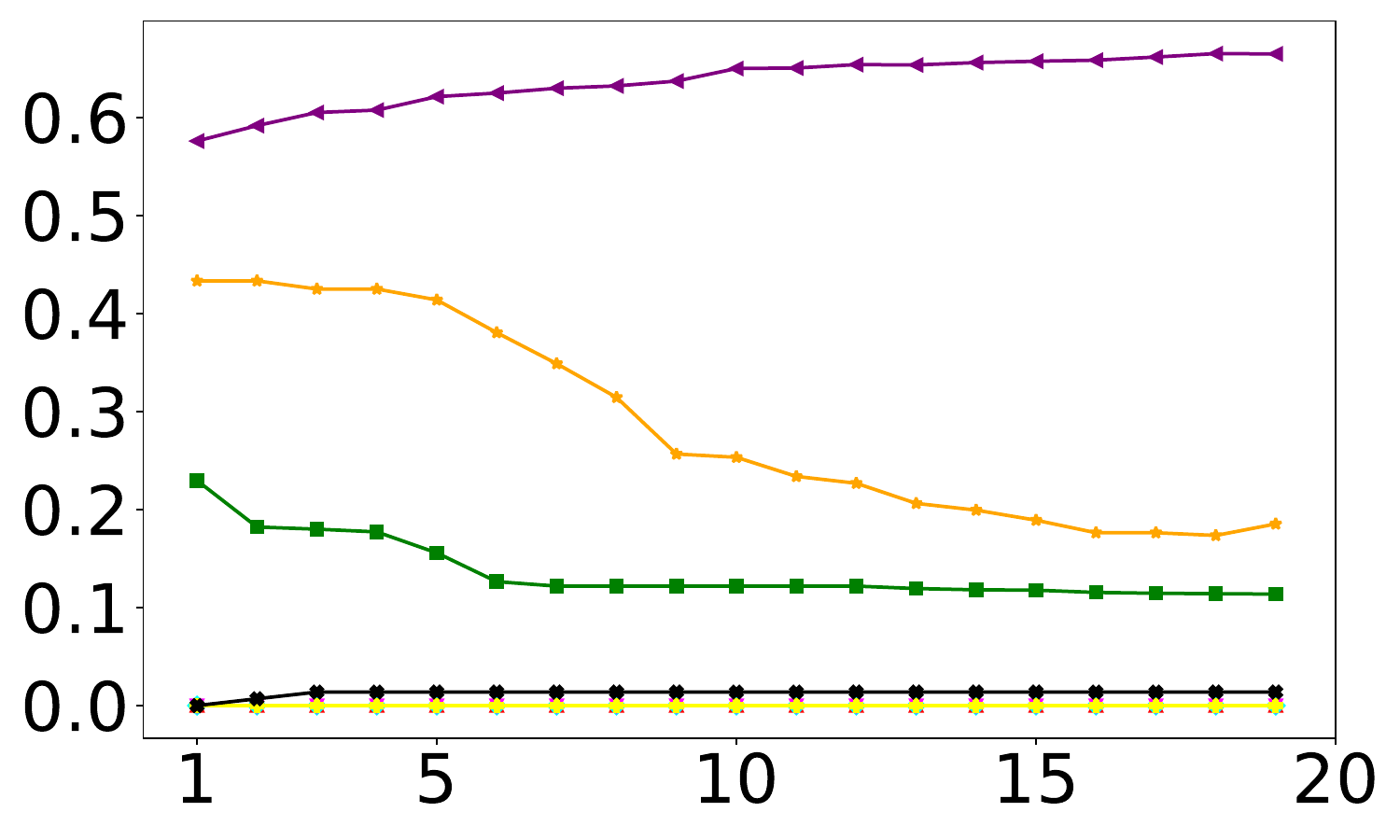}
        \caption{Micro-F1}
    \end{subfigure}
    \begin{subfigure}{0.18\textwidth}
        \centering
        \includegraphics[width=\textwidth]{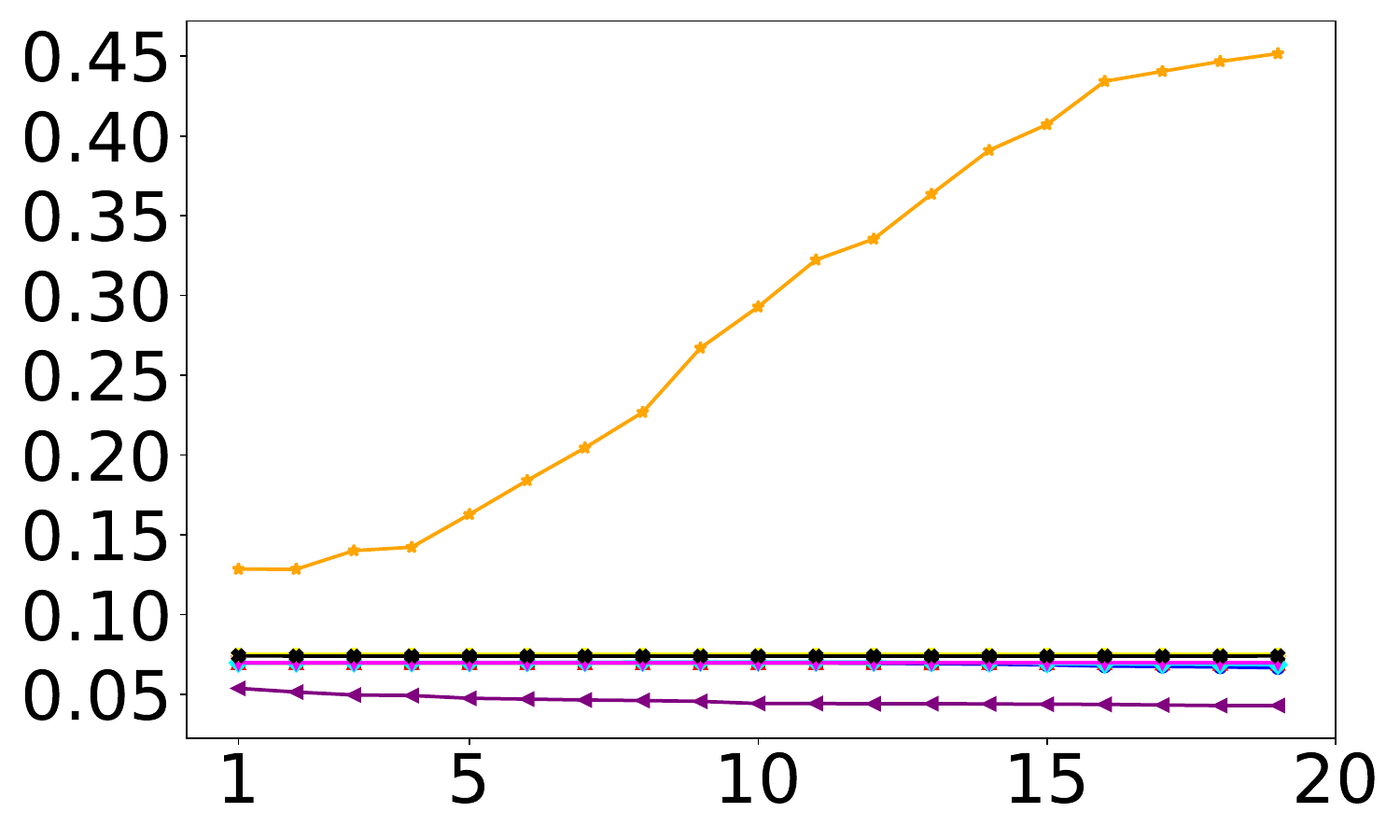}
        \caption{Hamming Loss}
    \end{subfigure}
    \begin{subfigure}{0.18\textwidth}
        \centering
        \includegraphics[width=\textwidth]{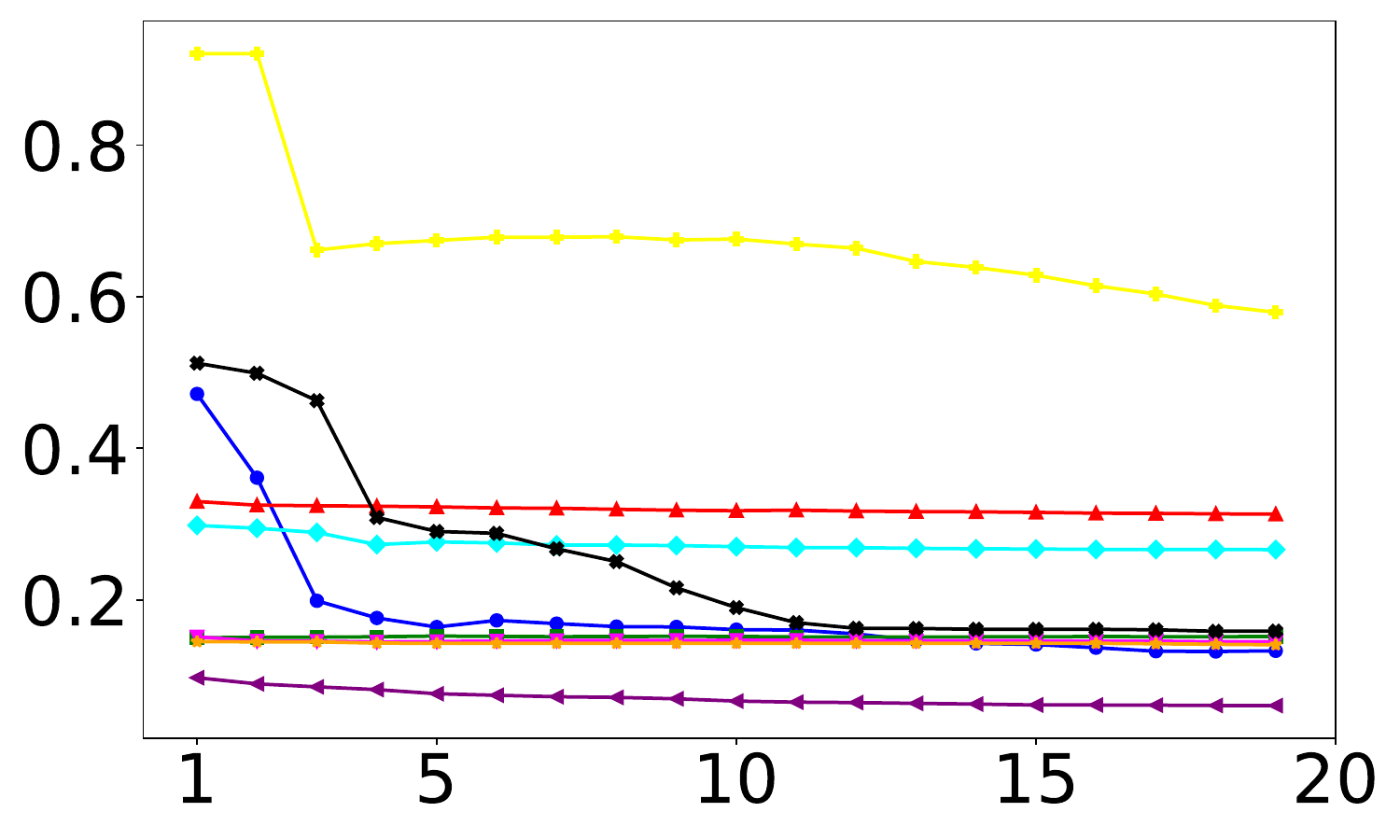}
        \caption{Ranking Loss}
    \end{subfigure}
    \begin{subfigure}{0.18\textwidth}
        \centering
        \includegraphics[width=\textwidth]{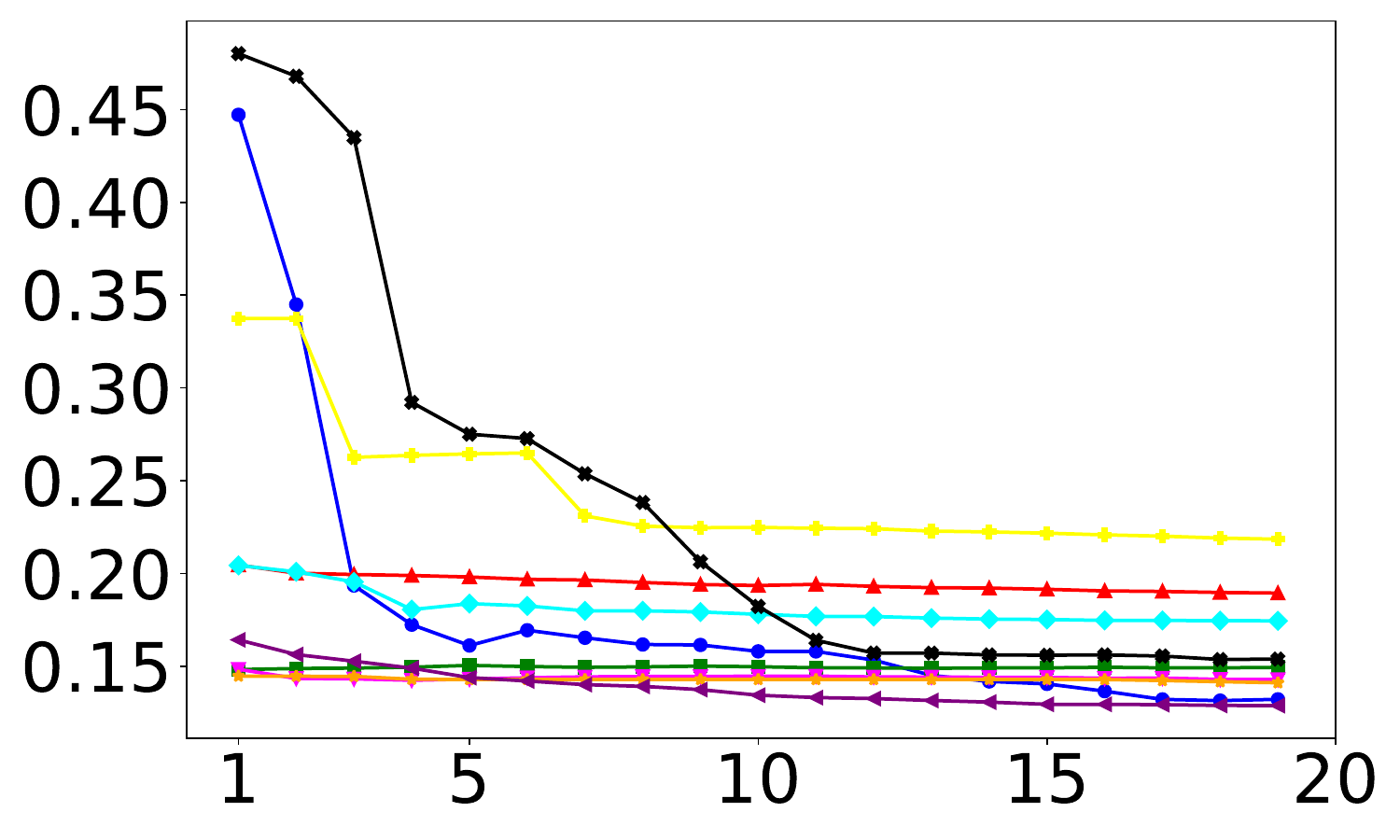}
        \caption{Coverage Error}
    \end{subfigure}
    \caption{ Nine methods on Mediamill in terms of Micro-F1, Hamming Loss, Ranking Loss and Coverage Error.}
    
    \vspace{-0.7mm}
\end{figure}


\end{document}